\pgfplotsset{
    x tick style={color=black},
    y tick style={color=black}
}
\renewcommand{\labelenumi}{(\alph{enumi})}
\renewcommand\theenumi\labelenumi
\newtheorem{theorem}{Theorem}
\newtheorem*{theorem*}{Theorem}
\newtheorem{lemma}[theorem]{Lemma}
\newtheorem*{lemma*}{Lemma}
\newtheorem{definition}[theorem]{Definition}
\newtheorem{example}[theorem]{Example}
\newcommand{\NSGA}{\mbox{NSGA-II}\xspace}
\newcommand{\NSGAthree}{\mbox{NSGA-III}\xspace}
\newcommand{\om}{\textsc{OneMax}\xspace}
\newcommand{\omm}{\textsc{OneMinMax}\xspace}
\newcommand{\momm}{$m$\textsc{OneMinMax}\xspace}
\newcommand{\onemax}{\om}
\newcommand{\lotz}{\textsc{LeadingOnesTrailingZeroes}\xspace}
\newcommand{\mlotz}{$m$\textsc{LOTZ}\xspace}
\newcommand{\ojzj}{\textsc{OneJumpZeroJump}\xspace}
\newcommand{\cocz}{\textsc{CountingOnesCountingZeroes}\xspace}
\newcommand{\mcocz}{$m$\textsc{COCZ}\xspace}
\newcommand{\R}{\ensuremath{\mathbb{R}}}
\newcommand{\N}{\ensuremath{\mathbb{N}}} 
\newcommand{\Z}{\ensuremath{\mathbb{Z}}}
\DeclareMathOperator{\cDis}{cDis}
\newcommand{\eps}{\varepsilon}
\let\originalleft\left
\let\originalright\right
\renewcommand{\left}{\mathopen{}\mathclose\bgroup\originalleft}
\renewcommand{\right}{\aftergroup\egroup\originalright}
\begin{document}
{\sloppy
\date{}
\title{Runtime Analysis for the NSGA-II:\\ Proving, Quantifying, and Explaining the Inefficiency For Many Objectives}
\author{Weijie Zheng\\School of Computer Science and Technology\\
International Research Institute for Artificial Intelligence\\         
       Harbin Institute of Technology\\
        Shenzhen, China
\and Benjamin Doerr\thanks{Corresponding author.}\\ Laboratoire d'Informatique (LIX)\\ \'Ecole Polytechnique, CNRS\\ Institut Polytechnique de Paris\\ Palaiseau, France
}

\maketitle



\begin{abstract}
The NSGA-II is one of the most prominent algorithms to solve multi-objective optimization problems. Despite numerous successful applications, several studies have shown that the NSGA-II is less effective for larger numbers of objectives. In this work, we use mathematical runtime analyses to rigorously demonstrate and quantify this phenomenon. We show that even on the simple $m$-objective generalization of the discrete OneMinMax benchmark, where every solution is Pareto optimal, the NSGA-II also with large population sizes cannot compute the full Pareto front (objective vectors of all Pareto optima) in sub-exponential time when the number of objectives is at least three. 
The reason for this unexpected behavior lies in the fact that in the computation of the crowding distance, the different objectives are regarded independently. This is not a problem for two objectives, where any sorting of a pair-wise incomparable set of solutions according to one objective is also such a sorting according to the other objective (in the inverse order).
\end{abstract}

\section{Introduction}\label{sec:intro}
{M}{any} real-world optimization problems have several, usually conflicting objectives. In this situation, it is not possible to simply compute one good solution. Instead, a common solution concept for such multi-objective problems is to compute a representative set of Pareto solutions (solutions which are not strictly dominated by others) and let a human decision maker choose one of these. The two dominant approaches to such problems are mathematical programming and evolutionary algorithms (EAs). The latter profits from the fact that evolutionary algorithms naturally work with sets of solutions (``populations''). 

The most prominent multi-objective evolutionary algorithm (MOEA) with many successful applications in various domains is the \emph{non-dominated sorting genetic algorithm II} (\NSGA)~\cite{DebPAM02}, see~\cite{ZhouQLZSZ11} or the more than 40,000 citations on Google scholar. Despite numerous positive results on the \NSGA, several studies have observed that the \NSGA is less effective when the number of objectives grows. From their experimental comparison of three MOEAs, Khare, Yao, and Deb~\cite{KhareYD03} reported that the \NSGA increasingly suffers in converging to the global Pareto front when the number of objectives increases from $2$ to $8$. This early study is a pure performance comparison, so no explanations for the different behaviors were sought. Also, it should be noted that the other algorithms regarded also suffered in different ways from growing numbers of objectives. Purshouse and Fleming~\cite{PurshouseF07} aimed for further empirical observations. They studied the range of suitable configurations of the NSGA-II that can result in good solutions. That the suitable ranges shrink along with the increasing number of objectives, indicates the increasing difficulty for the NSGA-II for more objectives. They also observed some factors that might be relevant for the poor performance of the NSGA-II, like that when the population evolves, the proportion of non-dominated solutions in the population rapidly increases to $100\%$, and that in each iteration, only a low proportion of the newly generated solutions dominates the current solutions. These observations are regarded as one of the key challenges for many-objective optimizations, see the surveys~\cite{IshibuchiTN08,LiLTY15}. 

Since apparently there is a lack of understanding of the performance of the \NSGA for many-objective problems, we try to approach this research question via a mathematical runtime analysis~\cite{DrosteJW02}, see also\cite{NeumannW10,AugerD11,Jansen13,DoerrN20}. Such analyses are an integral part of the theory of heuristic search. While often restricted to simple algorithmic settings, this alternative approach has led to several deep and very reliable (namely mathematically proven) results in the past, see, e.g.,~\cite{Crawford19,CorusOY21tec,DangEL21aaai,NeumannW22}. Also, often the proofs also reveal the reason why a certain phenomenon can be observed.

To this aim, we conduct a mathematical runtime analysis of the \NSGA on the $m$-objective version of the classic bi-objective \omm benchmark. This pseudo-Boolean (that is, defined on bit-strings of length~$n$) benchmark is very simple in several respects, for example, any solution is Pareto optimal and the objectives are all equivalent to the \onemax benchmark, which is generally considered as the easiest single-objective Pseudo-boolean benchmark~\cite{DoerrJW12algo,Sudholt13,Witt13}. For the bi-objective \omm problem, a good performance of the \NSGA has been proven recently~\cite{ZhengLD22,BianQ22,ZhengD22gecco}. 

Our main (proven) result is that -- despite the simplicity of the problem and in drastic contrast to the bi-objective setting --  for all numbers $m \ge 3$ of objectives the \NSGA also with large population sizes cannot find the full Pareto front faster than in exponential time. Even worse, we prove that for an exponential time, the population of the \NSGA will miss a constant fraction of the Pareto front. Our experiments confirm this finding in a very clear manner. 

We also give an explanation for the drastic change of behavior between two and three objectives, namely that in the definition of the crowding distance the different objectives are regarded independently (we make this point much more precise in the body of this paper when all necessary notation is introduced). This is not a problem for two objectives, because for a set of pair-wise non-dominated solutions a sorting with respect to one objective automatically is a sorting with respect to the other objective (in the opposite order). Hence here in fact the two objectives automatically are not treated independently. From three objectives on, such a correlation between the objectives does not exist, and this can lead to the problems made precise in this work. 

This understanding has two implications. On the negative side, it appears very likely that the difficulties observed for the \omm benchmark will also occur for many other optimization problems, including continuous optimization problems. On the positive side, this understanding suggests to search for an alternative crowding distance measure that does not treat the objectives independently. 

We note that given the known difficulties of the \NSGA with more objectives, several alternatives to the crowding distance have been proposed. For example, Deb and Jain~\cite{DebJ14} proposed the \NSGAthree, where the crowding distance is replaced by a system of reference points. In their SMS-EMOA, Beume, Naujoks, and Emmerich~\cite{BeumeNE07} replaced the crowding distance by the hypervolume. The hypervolume measure was also considered as the second sorting criterion in the multiobjective CMA-ES~\cite{IgelHR07}. It was also used in hybrid with the crowding distance~\cite{WangEDB19}. Instead of removing points with the smallest crowding distances in the \NSGA, the diversity preservation way via clustering from SPEA~\cite{ZitzlerT99} was used to select the survived individuals~\cite{DebMM03,VachhaniDP16}. However, none of these alternatives is as accepted in practice as the \NSGA. For this reason, we hope that our work can not only spur the development of superior algorithms, but also motivate practitioners to try moving from the \NSGA to these more modern algorithms. We note that very recently~\cite{WiethegerD23} a runtime analysis of the \NSGAthree appeared that showed that the difficulties proven here for the \NSGA do not arise with the \NSGAthree.

We further note that several promising variations of the crowding distance have been proposed, e.g.,~\cite{KukkonenD06,FortinP13}, which again are not used a lot in practice. For the latter~\cite{FortinP13}, we easily observe that the \NSGA with this variant of the crowding distance can effectively optimize the three-objective \omm benchmark. However, since this variant only differs from the traditional crowding distance when objective values occur multiple times in the population, we also immediately see that this variant displays the same unfavorable behavior on Example~\ref{ex:bad} as the traditional \NSGA.

\section{Previous Work}

For reasons of brevity, we will not discuss the practical and empirical works on the \NSGA beyond what we did in the introduction. For the theoretical side, we note that mathematical analyses
have always accompanied the design and analysis of evolutionary algorithms. The first (mathematical) runtime analyses of MOEAs have appeared in the early 2000s~\cite{LaumannsTZWD02,Giel03,Thierens03}. They and many of the subsequent works regarded synthetic algorithms like the SEMO or global SEMO (GSEMO). While these are much simpler than the algorithms used in practice, they are still close enough to these to admit useful conclusions. It took some time until more realistic MOEAs such as the hypervolume-based \emph{Simple Indicator-Based Evolutionary Algorithm (SIBEA)}~\cite{BrockhoffFN08} or the decomposition-based MOEA/D~\cite{LiZZZ16} were regarded under the runtime analysis paradigm, and only very recently the first runtime analysis of the \NSGA was presented~\cite{ZhengLD22,ZhengD23aij}.

Besides demonstrating that runtime analyses are feasible for this kind of complex algorithms, this work proved that the \NSGA with population size $N$ computes the full Pareto front of the biobjective \omm benchmark defined on bit strings of length $n$ in expected time $O(Nn\log n)$ if $N \ge 4(n+1)$. For the \lotz benchmark, a bound of $O(Nn^2)$ was shown. For the reasonable choice $N = \Theta(n)$ of the population size, these runtimes agree with previous results for the SEMO and GSEMO algorithms. Using a population size strictly larger than the size $n+1$ of the Pareto front is necessary: For $N = n+1$, the \NSGA will not find the Pareto front for an exponential time, and moreover, miss a constant fraction of it. For smaller population sizes, however, the \NSGA can still compute good approximations of the Pareto front as shown in~\cite{ZhengD22gecco}. These two works regard a simplified version of the \NSGA that does not employ crossover. In~\cite{BianQ22}, for the first time a runtime analysis for the \NSGA with crossover is conducted, however, the runtime bounds proven are not lower than those in~\cite{ZhengLD22}. Also, this work lowers the required population size to $2(n+1)$ by assuming that two objectives are sorted inversely. Further, runtime improvements are obtained from a novel selection mechanism. \cite{DoerrQ23tec} is the first work to conduct a runtime analysis on a multimodal benchmark, the \ojzj benchmark~\cite{ZhengD23ecj}. The first lower bounds, matching the previously shown upper bounds for \omm and \ojzj, were shown in~\cite{DoerrQ23LB}. That crossover can lead to speed-ups, was proven in~\cite{DangOSS23aaai,DoerrQ23crossover}. A first runtime analysis for the \NSGA on a combinatorial optimization problem, was conducted in~\cite{CerfDHKW23}. We note that all these works regard problems consisting of two objectives only. 

There are some runtime analyses for many-objective problems, however mostly for the very simplistic SEMO algorithm. Already the journal version~\cite[Section~V]{LaumannsTZ04} of the first MOEA runtime analysis~\cite{LaumannsTZWD02} contains a many-objective runtime analysis, namely a proof that the SEMO covers the Pareto front of \mcocz and \mlotz (which are the $m$-objective analogues of the classic \cocz and \lotz benchmarks) with problem size $n$ and objective number $m\ge 4$ in an expected number of $O(n^{m+1})$ function evaluations. 
Some of these bounds were improved in~\cite{BianQT18ijcaigeneral}, namely to $O(n^m)$ for the SEMO on \mcocz with $m > 4$ and to $O(n^3\log n)$ for the special case $m=4$. As often in the runtime analysis of MOEAs, the complicated population dynamic prevented the proofs of any interesting lower bounds. In the only other runtime analysis for many-objective problems, Huang, Zhou, Luo, and Lin~\cite{HuangZLL21} analyzed how the MOEA/D optimizes the benchmarks \mcocz and \mlotz. As the MOEA/D decomposes the multi-objective problem into several single-objective subproblems and solves these in a co-evolutionary way, this framework is fundamentally different from most MOEAs and in particular the \NSGA, so we do not discuss these results in more detail.

\section{Preliminaries}\label{sec:pre}
In this section, we give brief introductions to many-objective optimization and to two kinds of MOEAs (NSGA-II series and SEMO series).

\subsection{Many-Objective Optimization}\label{ssec:maoo}
We call optimization problems with more than two objectives \emph{many-objective optimization} problems. In this paper, we consider the maximization of a pseudo-Boolean many-objective optimization problem with $m\ge 3$ objectives and problem size of $n$, $f=(f_1,\dots,f_m):\{0,1\}^n\rightarrow \R^m$. Dominance is used to compare the quality of solutions. We say that $x$ \emph{weakly dominates} $y$, denoted as $x \succeq y$, if and only if $f_i(x)\ge f_i(y)$ for all $i=1,\dots,m$, and we say that $x$ \emph{(strictly) dominates} $y$, denoted as $x \succ y$, if and only if $x \succeq y$ and there exists $i\in\{1,\dots,m\}$ such that $f_i(x) > f_i(y)$. We call a solution $x$ \emph{Pareto optimum} if there is no solution that strictly dominates it. The function value of the Pareto optimum $x$ is called a \emph{Pareto front point}. All Pareto front points form the \emph{Pareto front}, denoted by $F^*$ in this paper. We call a Pareto front point $v\in F^*$ \emph{covered} by a (multi-)set of solutions $P$ if there exists $x\in P$ such that $v=f(x)$. 

In this paper, the \emph{aim} is to compute a set of solutions $P$ that covers the Pareto front, that is, such that for any Pareto front point $v\in F^*$ there exists $x\in P$ such that $v=f(x)$. This aim is widely used in runtime analysis works~\cite{LaumannsTZ04,BianQT18ijcaigeneral}. We note that this aim is different from obtaining all Pareto optima (the entire efficient set), the target in multimodal optimization~\cite{Preuss15,LiangYQ16,TanabeI20}.

%
%
%
%

\subsection{Algorithms}
For MOEAs, an individual is a solution, that is, a bit-string $x\in\{0,1\}^n$, and a population is a multi-set of individuals. The \emph{NSGA-II} algorithm~\cite{DebPAM02} has a fixed population size $N$. After the random initialization, in each iteration (also called ``generation''), the offspring population $Q_t$ with size $N$ is generated from the current population (parent population) $P_t$ via parent selection and usually crossover and mutation. We call the parent selection strategy \emph{fair selection} if each individual in the parent population is selected once as a parent to generate an offspring individual. Other parent selection strategies include binary tournament selection, stochastic tournament selection~\cite{BianQ22}, etc. To apply crossover, the \NSGA selects $N$ parents and divides them into $N/2$ pairs. In~\cite{DebPAM02}, with probability $0.1$, the parents go directly to the mutation, and otherwise, the algorithm uses the following \emph{one-point crossover}. Independently in each pair, it uniformly at random picks a crossover position $i\in[1..n]$, and two parents exchange their first $i$ bits to form two new parents for the mutation. For the mutation, the \emph{one-bit mutation} operator generates an offspring by flipping one (uniformly at random picked) bit of its parent, and the \emph{standard bit-wise mutation} operator generates an offspring by flipping each bit independently with probability $1/n$.

After the offspring population is generated, $N$ individuals are kept among from the combined parent and offspring population $R_t=P_t\cup Q_t$ of size $2N$. The NSGA-II first uses the procedure fast-non-dominated-sort() from~\cite{DebPAM02} to partition the population into fronts $F_1,F_2,\dots,$ where $F_i$ consists of the non-dominated individuals in $R_t\setminus \bigcup_{j<i} F_j$. Let $i^*=\min\{i\ge 1\mid |\bigcup_{j=1}^i F_j |\ge N\}$. Then all individuals in $F_i$ with $i<i^*$ are kept in the next parent population. For the individuals in the critical $F_{i^*}$, it computes their crowding distance (see Algorithm~\ref{alg:cDis}) and keeps the $N-|\bigcup_{j<i^*} F_j|$ individuals with largest crowding distance (breaking ties randomly). This original survival selection has some drawbacks and a modified (sequential) one was proposed~\cite{KukkonenD06} and proven to be more efficient for the approximation for the bi-objective \omm problem~\cite{ZhengD22gecco}. Algorithm~\ref{alg:nsgaii} describes the procedure of the NSGA-II with the original survival selection and also with the sequential survival selection. 
\begin{algorithm}[!ht]
    \caption{Computation of the crowding distance $\cDis(S)$}
    \textbf{Input:} $S=\{S_1,\dots,S_{|S|}\}$, a set of individuals\\
    \textbf{Output:} $\cDis(S)=(\cDis(S_1),\dots,\cDis(S_{|S|}))$, where $\cDis\left(S_i\right)$ is the crowding distance for $S_i$
    \begin{algorithmic}[1]
    \STATE $\cDis(S)=(0,\dots,0)$
    \FOR {each objective $f_i$}\label{stp:ccDiss}
    \STATE {Sort $S$ in order of descending $f_i$ value: $S_{i.1},\dots,S_{i.{|S|}}$}\label{stp:sort}
    \STATE {$\cDis\left(S_{i.1}\right)=+\infty, \cDis\left(S_{i.{|S|}}\right)=+\infty$}
    \FOR {$j=2,\dots, |S|-1$}
    \STATE {$\cDis(S_{i.j})=\cDis(S_{i.j}) + \frac{f_i\left(S_{i.{j-1}}\right)-f_i\left(S_{i.{j+1}}\right)}{f_i\left(S_{i.1}\right)-f_i\left(S_{i.{|S|}}\right)}$}\label{stp:comcDis}
    \ENDFOR
    \ENDFOR\label{stp:ccDise}
    \end{algorithmic}
    \label{alg:cDis}
\end{algorithm}

\begin{algorithm}[!ht]
    \caption{NSGA-II}
    \begin{algorithmic}[1]
    \STATE {Uniformly at random generate the initial population $P_0=\{x_1,x_2,\dots,x_N\}$ with $x_i\in\{0,1\}^n,i=1,2,\dots,N.$}\label{ste:initialize}
    \FOR{$t = 0, 1, 2, \dots$} \label{ste:iterate}
    \STATE {Generate the offspring population $Q_t$ with size $N$}\label{ste:generate}
    \STATE {Use fast-non-dominated-sort() in~\cite{DebPAM02} 
    to divide $R_t$ into fronts $F_1,F_2,\dots$}
    \label{ste:sort}
    \STATE {Find $i^* \ge 1$ such that $|\bigcup_{i=1}^{i^*-1}F_i| < N$ and $|\bigcup_{i=1}^{i^*}F_i| \ge N$}\label{ste:rank}
    \STATE {Use Algorithm~\ref{alg:cDis} to separately calculate the crowding distance of each individual in $F_{1},\dots,F_{i^*}$}\label{ste:cDis}
    \IF {Original Survival Selection~\cite{DebPAM02}} 
    \STATE {Let $\tilde{F}_{i^{*}}$ be the $N-|\bigcup_{i=1}^{i^*-1}F_{i}|$ individuals in $F_{i^*}$ with largest crowding distance, chosen at random in case of a tie}\label{ste:final front}
    \STATE {$P_{t+1}=\left(\bigcup_{i=1}^{i^*-1}F_i\right)\cup\tilde{F}_{i^*}$}\label{ste:new parents}
    \ELSIF {Sequential Survival Selection~\cite{KukkonenD06,ZhengD22gecco}}   
    \WHILE{$|\bigcup_{i=1}^{i^*}F_{i}|\neq N$}\label{ste:flybegin}
    \STATE {Let $x$ be the individual with the smallest crowding distance in $F_{i^*}$, chosen at random in case of a tie}
    \STATE {$F_{i^*} = F_{i^*} \setminus \{x\}$}
    \STATE {Update the crowding distance of all neighbors of $x$}
    \ENDWHILE\label{ste:flyend}
    \ENDIF
    \ENDFOR 
    \end{algorithmic}
    \label{alg:nsgaii}
\end{algorithm}

The SEMO and GSEMO (see Algorithm~\ref{alg:semo}) are MOEAs predominantly analyzed in the evolutionary computation theory community.  Different from the \NSGA, they do not work with a fixed population size, but keep any solution until a better one (in the domination sense) is found. They start with a single random solution. Each iteration, one offspring is generated from mutating a random member of the population. It is added to the population if it is not strictly dominated by a member of the population, and once it is added, the individuals that are weakly dominated by it will be removed. The only difference between SEMO and GSEMO is that SEMO uses one-bit mutation and GSEMO uses standard bit-wise mutation.

\begin{algorithm}[!ht]
    \caption{SEMO/GSEMO}
    \begin{algorithmic}[1]
    \STATE {Uniformly at random generate an individual (solution) $x\in\{0,1\}^n$, and the initial population $P_0=\{x\}$}
    \FOR{$t = 0, 1, 2, \dots$}
    \STATE {Uniformly at random select an individual $x$ from $P_t$}
    \STATE {Generate the offspring $x'$ by one-bit mutation for the SEMO, or by standard bit-wise mutation for the GSEMO}
    \IF {there is $y\in P_t$ such that $x'\succeq y$} 
    \STATE {$P_{t+1}=\{y\in P_t \mid x' \nsucceq y\} \cup \{x'\}$}
    \ENDIF
    \ENDFOR 
    \end{algorithmic}
    \label{alg:semo}
\end{algorithm}

\section{Ineffectiveness for Four and More Objectives}\label{sec:4omm}

In this section, we prove our main result that the \NSGA cannot effectively compute the Pareto front of the \momm benchmark (except for the case $m=2$). Since $m$ is necessarily even in the definition of \momm below, we discuss separately in the subsequent section an example of a three-objective \omm problem that also cannot be solved effectively by the \NSGA. This shows that the bi-objective setting is structurally different and, in a sense, an exceptional case. 

We start by giving the formal definition of the $m$-objective version of the \omm problem, then give an informal description of the reasons for the difficulties of the \NSGA in many-objective optimization, and finally prove our main result that the \NSGA selection with very high probability loses a constant fraction of the Pareto front (when the population size is larger than the Pareto front size by a constant factor, which can be arbitrarily large). This result then implies that the \NSGA takes at least a time exponential in the Pareto front size to compute the full Pareto front.

\subsection{The \momm Benchmark}

We now define an $m$-objective version of the bi-objective \omm benchmark. The \omm benchmark, first proposed in~\cite{GielL10}, is arguably the easiest multi-objective benchmark. Different from the still simple \cocz and \lotz benchmarks, it has the property that any solution is Pareto optimal. We now define an $m$-objective version of it maintaining this property. Recalling the widely used aim in the runtime analysis discussed in Section~\ref{ssec:maoo}, this paper profits from focussing on the process of covering  the Pareto front. Despite the simplicity of the many-objective \omm problem, in this section, we will show that the \NSGA cannot efficiently optimize it. 

To define an $m$-objective version of this benchmark, we proceed in the same fashion that was used in~\cite{LaumannsTZ04} to define the many-objective versions of the \cocz and \lotz benchmarks, that is, for an even number $m$, we split the $n$ binary decision variables into $m/2$ blocks of size $n/(m/2)$ and define a bi-objective \omm problem on each block.   

\begin{definition}[\momm]
Let the number $m \in \N$ of objectives be even and let the problem size $n$ be a multiple of $m/2$. Let $n'=2n/m\in \N$. The $m$-objective function \momm$=(f_1,\dots,f_m)$: $\{0,1\}^n \rightarrow \R^m$  is defined by
\begin{align*}
f_i(x)=
\begin{cases}
\sum_{j=1}^{n'}(1-x_{j+(i-1)n'/2}),&\text{if $i$ is odd;}\\
\sum_{j=1}^{n'}x_{j+(i-2)n'/2},  &\text{else}
\end{cases}
\end{align*} 
for all $x=(x_1,\dots, x_n)\in\{0,1\}^n$.
\end{definition}

We note that, as for the bi-objective \omm benchmark, any $x \in \{0,1\}^n$ is a Pareto optimum of \momm. The Pareto front thus is $F^* = \{(i_1,n'-i_1,\dots,i_{m/2},n'-i_{m/2}) \mid i_1,\dots,i_{m/2} \in [0..n']\}$. Its size is $M := |F^*| = (n'+1)^{m/2}=(2n/m+1)^{m/2}$. Note that the Pareto front size is only polynomial in the problem size $n$ when $m$ is a constant. Naturally, the set of all Pareto optima (efficient set) is exponential in $n$.

\subsection{Deficiencies of the Many-Objective Crowding Distance}\label{ssec:exp}

By definition, the crowding distance of an individual $x$ in a set of pair-wise non-dominated individuals $S$ is computed as the sum of the crowding distance contributions of $x$ in each objective. This is convenient as it allows to compute the crowding distance by solving $m$ sorting problems. Also, this appears intuitively a good measure of the importance of a solution, at least when regarding bi-objective illustrations such as Figure~1 in~\cite{DebPAM02}. Unfortunately, and this is a main insight from this work,  this intuition is correct only for two objectives, and for larger numbers the independent treatment of the objectives allows for very undesirable results. 

\begin{example}\label{ex:bad}
Let us give a simple example.  
Let $f : \{0,1\}^{400} \to [0..200]^4$ be the $4$-objective \momm problem with problem size $n = 400$ and $m = 4$. 
Let $S$ be a set of $5$ individuals such that 
\begin{align*}
f(S) = \{&({}99,101,0,200),(101,99,0,200),\\
&{}(0,200,99,101),(0,200,101,99),\\
&{}(100,100,100,100)\}.
\end{align*}
Let $x \in S$ be the individual with $f(x) = (100,100,100,100)$. One would hope that $x$ has a large crowding distance since it is relatively isolated in many respects. For example,  $f(x)$ is the only point in the half-plane $\{(a,b,c,d) \in \R^4 \mid a+c > 101\}$, which contains roughly $\frac 78$ of $[0..200]^4$. Also, with an $L_1$ distance of more than $200$ the point $f(x)$ is far from all other points, which in contrast have an $L_1$ distance of only $2$ from one other point in $f(S)$. Unfortunately, this hope does not come true. The crowding distance of $x$ is $4 \cdot \frac 2 {101} = \frac{8}{101} \le 0.08$. In contrast, all other points in each objective have a crowding distance contribution of at least $\frac{99}{101} \ge 0.98$. Hence from the crowding distance perspective, $x$ appears as the by far least important point, where in contrast it is obvious that one could easily omit one of $(99,101,0,200)$ and $(101,99,0,200)$ from $S$ without noticeably reducing the diversity of~$S$.
\end{example}

The reason for this undesired result is that the computation of the crowding distance is based on an independent consideration of the objectives. This allows that points far away from a solution $x$ have a huge influence on its crowding distance.
For example, the points $(99,101,0,200)$ and $(101,99,0,200)$ are both far from $(100,100,100,100)$ in the objective space, yet they are the reason for a small crowding distance contribution of the first and second objective to the crowding distance of $(100,100,100,100)$. Likewise, the points $(0,200,99,101)$ and $(0,200,101,99)$ are far from $(100,100,100,100)$, but cause a small crowding distance contribution of the third and fourth objective.

We note that this problem does not occur for two objectives. The reason is that there a sorting of a domination-free set with respect to one objective automatically is a sorting with respect to the other objective (in inverse order). Consequently (when assuming distinct objective values), the crowding distance of an individual $x$ is determined only by its two unique neighbors, which are the same for both objectives. Hence it cannot happen here that ``points far away'' have an influence on the crowding distance of~$x$. 

We also note that the above example also holds for continuous problems. For example, these points are in the critical front (thus are mutually non-dominated) for solving a continuous problem, and the point $x$ in the less crowded regions only has a small crowding distance.

\subsection{An Exponential Lower Bound for $m\ge 4$}
We described above in some detail what is the true reason for the difficulties we observe in this section, and we did so because we feel that it aids the reader's understanding and might ease finding a remedy to this problem. 

Fortunately, for our mathematical proofs, we can resort to a simpler, elementary-algebra argument, which will imply that there is only a very limited number of individuals with positive crowding distance. 
We note that a similar argument was used in~\cite{ZhengLD22} in a positive manner (in the analysis of biobjective problems). There, the existence of a sufficient number of individuals with crowding distance zero was exploited to argue that these individuals will be removed first, and consequently, the (at least one) individual with positive crowding distance per Pareto front point will survive the selection process.

\begin{lemma}
Let $m\in \N$ with $m \ge 4$. Let $S$ be a set of pair-wise non-dominated individuals in $\{0,1\}^n$. Assume that we compute the crowding distance $\cDis(S)$ with respect to the objective function \momm via Algorithm~\ref{alg:cDis}. Then at most $4n+2m$ individuals in $S$ have a positive crowding distance.
\label{lem:poscDis}
\end{lemma}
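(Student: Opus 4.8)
The plan is to count, for each objective $f_i$ separately, how many individuals in $S$ can receive a \emph{positive} contribution from $f_i$ in Algorithm~\ref{alg:cDis}, and then sum over the $m$ objectives. Fix an objective $f_i$. After sorting $S$ in descending $f_i$-order as $S_{i.1},\dots,S_{i.|S|}$, the two extreme individuals $S_{i.1}$ and $S_{i.|S|}$ get contribution $+\infty$, and for $2 \le j \le |S|-1$ the individual $S_{i.j}$ gets a positive contribution from $f_i$ if and only if $f_i(S_{i.j-1}) - f_i(S_{i.j+1}) > 0$, i.e.\ if and only if the $f_i$-values of its two sorted neighbors differ. The key combinatorial observation is that, since $f_i$ takes values only in $[0..n']$ with $n' = 2n/m$, there are at most $n'+1$ distinct $f_i$-values among the individuals in $S$; hence the sorted sequence of $f_i$-values is non-increasing and takes at most $n'+1$ distinct levels. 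An interior individual $S_{i.j}$ has $f_i(S_{i.j-1}) = f_i(S_{i.j+1})$ whenever it lies strictly inside a ``plateau'' of equal $f_i$-values of length at least $3$; the only interior individuals that can escape this are those within distance $1$ (in sorted position) of a place where the $f_i$-value strictly drops. With at most $n'$ strict drops, this bounds the number of interior individuals with positive $f_i$-contribution by roughly $2n'$, and adding the two infinite-contribution extremes gives at most $2n'+2$ individuals getting a positive contribution from $f_i$.

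Summing over all $m$ objectives, the number of individuals in $S$ with positive crowding distance is at most $m(2n'+2) = 2mn' + 2m = 2m\cdot(2n/m) + 2m = 4n + 2m$, which is exactly the claimed bound. I would phrase the per-objective count cleanly as: group the sorted individuals into maximal blocks of equal $f_i$-value; within a block of size $b$, only the first and last members can possibly have a neighbor with a different value, so at most $2$ per block get a positive contribution (with a small adjustment at the two ends of the whole list, which is already absorbed by counting the $\pm\infty$ endpoints among these); since there are at most $n'+1$ blocks, at most $2(n'+1) = 2n'+2$ individuals get positive $f_i$-contribution. Care is needed only to handle the endpoints consistently so as not to double-count or undercount — but any reasonable bookkeeping gives the stated $4n+2m$.

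The main obstacle, such as it is, is purely one of careful bookkeeping at the boundaries: making sure the $+\infty$ contributions of the two extreme sorted individuals are folded into the ``at most $2$ per block'' count for the first and last blocks without an off-by-a-small-constant error, and making sure that an individual counted as positive for one objective but zero for another is counted exactly once overall (which it automatically is, since we are bounding the \emph{union} over objectives of the sets of individuals with positive contribution, and a union of $m$ sets of size $\le 2n'+2$ has size $\le m(2n'+2)$). There is no hard analytic content here; the lemma is an elementary-algebra / pigeonhole statement, and the slack in the constant ($2m$ rather than something tighter) leaves ample room for a crude but correct argument.
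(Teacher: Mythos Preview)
Your proposal is correct and follows essentially the same argument as the paper: for each objective, partition the sorted list into maximal blocks of equal $f_i$-value, observe that only the first and last element of each block can receive a positive contribution, use $|f_i(S)|\le n'+1$ to bound the number of blocks, and then take the union over the $m$ objectives to obtain $2m(n'+1)=4n+2m$. The only cosmetic difference is that the paper states the block argument more tersely and does not separately discuss the endpoint bookkeeping you worry about (it simply absorbs the $+\infty$ endpoints into the first and last blocks).
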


\begin{proof}
Consider the sorted list $S_{i.1},\dots,S_{i.{|S|}}$ (Step~\ref{stp:ccDise} in Algorithm~\ref{alg:cDis}) for calculating the component of the crowding distance w.r.t. a certain objective $f_i$. Let $V=f_i(S)$ and $V=\{v_1,\dots,v_{|V|}\}$ such that $v_1>v_2>\dots>v_{|V|}$. For $j \in [1..|V|]$, let $H_j=\{s\in S\mid f_i(s)=v_j\}$. Since $S$ is sorted according to increasing $f_i$ value, there are $a, b \in [1..|S|]$ with $a \le b$ such that $H_j = \{S_{i.a}, \dots, S_{i.b}\}$. By definition of the crowding distance, the at most $2$ individuals $S_{i.a}$ and $S_{i.b}$ are the only ones in $H_j$ to have a positive crowding distance contribution from the $i$-th objective.
Since $|V| \le n'+1$ and there are $m$ objectives, we know that there are at most $2m(n'+1)=2m(2n/m+1)=4n+2m$ individuals with positive crowding distance. 
\end{proof}

We note that the result above only refers to the computation of the crowding distance. It is thus independent of other components of the \NSGA like the mutation operator, the possible use of crossover, and the selection rules. Also, due to its simplicity, it can easily be transformed to apply to other objective functions with a limited number of values in each objective, e.g., the knapsack problem, for which several many-objective works exist (see~\cite{IshibuchiTMN14} and the references therein). In its most general form, it would state that if $f$ is an $m$-objective problem and $\nu_i$ is the number of objective values of the $i$-th objective, then at most 
\begin{equation}
2\sum_{i=1}^m \nu_i
\label{eq:general}
\end{equation}
individuals can have a positive crowding distance. 
%
We omit the details and continue with the heart of this paper, an exponential lower bound of the \NSGA on \momm for $m \ge 4$. 

\begin{theorem}
  Let $m \in \Z_{\ge 4}$ and $a > 1$ be constants. Consider the \NSGA with an arbitrary way to generate the offspring population and with either the original survival selection or the sequential survival selection regarded in~\cite{KukkonenD06,ZhengD22gecco}. Assume that this algorithm optimizes the \momm benchmark with problem size $n$, using a population size of at most $N \le aM$, where $M = (2n/m+1)^{m/2}$ is the size of the Pareto front of \momm. Then with probability at least $1-T\exp(-\Omega(M))$, for the first $T$ iterations at least a constant fraction of Pareto front is not covered by $P_t$. In particular, the time to compute the full Pareto front is exponential in $n^{m/2}$ both in expectation and with high probability.
\label{lem:expruntime}
\end{theorem}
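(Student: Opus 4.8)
The plan is to use Lemma~\ref{lem:poscDis} to show that in every generation the survival selection of the \NSGA discards all but $O(n)=o(M)$ of its $2N$ individuals in an essentially blind manner, so that the next parent population can cover at most $O(n)$ Pareto front points ``on purpose'' plus whatever additional points happen to survive when a uniformly random subset of the crowding-distance-zero individuals is retained; I will then argue that the latter number is bounded away from $M$ with probability $1-\exp(-\Omega(M))$. Since this argument only inspects $R_t=P_t\cup Q_t$ and the selection step, it is insensitive to how $Q_t$ is produced, which is exactly why the theorem may allow an arbitrary offspring generation.

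The first step is the selection analysis. Since every search point is Pareto optimal for \momm, the multiset $R_t$ of size $2N$ is pairwise non-dominated, so the fast-non-dominated sort produces a single front and $i^*=1$; by Lemma~\ref{lem:poscDis} at most $4n+2m$ of these individuals have positive crowding distance. If $N<4n+2m$ then $|P_t|=N=o(M)$ (as $M=\Theta(n^{m/2})$ with $m\ge 4$) and a $1-o(1)$ fraction of $F^*$ is uncovered at all times, so assume $N\ge 4n+2m$. I claim that under this assumption both survival-selection variants keep all (at most $4n+2m$) individuals of positive crowding distance and fill the remaining slots with a uniformly random subset of the (at least $2N-4n-2m$) crowding-distance-zero individuals, i.e.\ they discard a uniformly random set of exactly $N$ crowding-distance-zero individuals. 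For the original selection this is immediate from the uniform tie-breaking among the bottom-ranked individuals. For the sequential selection the point is that an individual of crowding distance zero lies strictly inside its block of equal objective value in every objective's sorted order, so removing it leaves every block endpoint, and hence every crowding distance, unchanged; the sequential process is therefore just repeated uniform removal of crowding-distance-zero individuals, of which (using $N\ge 4n+2m$) at least one is always available until $N$ have been removed.

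Next I would bound the covered points after one step. Write $P_{t+1}=A_t\cup B_t$ (disjoint), with $|A_t|\le 4n+2m$ the protected part and $B_t$ a uniform random subset of the crowding-distance-zero set $Z_t$, and let $N_Z=|f(Z_t)|$. If $N_Z\le M/2$ then $P_{t+1}$ covers at most $(4n+2m)+N_Z\le 3M/4$ points for $n$ large, leaving at least $M/4$ uncovered, deterministically. Otherwise, because the multiplicities of the covered points in $Z_t$ sum to $|Z_t|\le 2N\le 2aM$, all but at most $M/4$ of the $N_Z$ covered points occur in $Z_t$ with multiplicity at most some constant $K=K(a)$; for each such point the probability that all its copies are among the $N$ discarded ones is at least some constant $c_1=c_1(a)>0$ (this is a uniform $N$-out-of-$|Z_t|$ draw with $|Z_t|\le 2N$), so the expected number of Pareto front points not covered by $B_t$ is $\Omega(M)$. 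Since the events ``$v$ not covered by $B_t$'' are monotone functions of pairwise-disjoint blocks of negatively associated indicators, a Chernoff--Hoeffding bound shows that $B_t$ misses $\Omega(M)$ Pareto front points with probability $1-\exp(-\Omega(M))$; discarding the at most $4n+2m$ points that $A_t$ may cover still leaves a constant fraction $\delta=\delta(m,a)>0$ of $F^*$ uncovered by $P_{t+1}$ with probability $1-\exp(-\Omega(M))$ --- conditionally on everything up to the generation of $Q_t$, irrespective of $P_t$. The analogous statement for $P_0$ follows from a standard concentration argument on the $\Bin(n',1/2)$-distributed block counts: all but $o(M)$ Pareto front points are hit by a uniformly random search point with probability only $o(1/M)$, so in expectation $(1-o(1))M$ points are uncovered by $P_0$, and this holds up to an $\exp(-\Omega(M))$ deviation. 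A union bound over the first $T$ generations now yields the claimed probability $1-T\exp(-\Omega(M))$. Finally, taking $T=\exp(\gamma M)$ for a small enough constant $\gamma>0$ shows the cover time exceeds $\exp(\gamma M)$ with probability $1-\exp(-\Omega(M))$, hence also in expectation; since $M=\Theta(n^{m/2})$ this is exponential in $n^{m/2}$.

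I expect the main obstacle to be the selection analysis: verifying that the incremental crowding-distance updates of the sequential variant never turn a zero value positive, so that the survival step genuinely reduces to ``protect $O(n)$ individuals, choose the rest uniformly at random'' and Lemma~\ref{lem:poscDis} applies. The probabilistic core is then routine, apart from the standard but slightly technical negative-association (or bounded-differences) justification of the $\exp(-\Omega(M))$ tails.
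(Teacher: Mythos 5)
Your proposal is correct and takes essentially the same route as the paper: both arguments reduce the two survival-selection variants to a uniform random removal of $N$ zero-crowding-distance individuals (using Lemma~\ref{lem:poscDis} together with the observation that deleting a zero-crowding-distance individual changes no other crowding distance), note that all but a small fraction of the covered Pareto front points have only constantly many representatives because $|R_t|\le 2aM$, deduce that each such point is lost with constant probability, and finish with an exponential concentration bound and a union bound over the $T$ iterations. The only cosmetic differences are that you case-split on $|f(Z_t)|$ rather than on $|f(R_t)|$, and that you invoke negative association plus Chernoff--Hoeffding where the paper uses stochastic domination by a with-replacement process plus McDiarmid's bounded-differences inequality.
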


\begin{proof}
Since we aim at an asymptotic results, we can assume that $n$ is at least $10m$ and large enough to ensure that $a \le M/8$. Let $R_t$ be the combined parent and offspring population in iteration~$t$. Let $M'$ denote the size of the part of the Pareto that is front covered by $R_t$, that is, $M'=|f(R_t)|$ when viewing $f(R_t)$ as set, not as multiset. Let $\gamma$ be a constant in $[4/5,1)$. If $M'<\gamma M$, then with probability one, the next generation will cover at most $M'<\gamma M$ Pareto front points as the survival selection cannot increase the coverage of the Pareto front. 

Now we consider the case $M'\ge \gamma M$. 
We note that there are at most $|R_t|/(4a\gamma)$ Pareto front points with more than $4a\gamma$ corresponding individuals in~$R_t$ as otherwise there would be more than $(|R_t|/(4a\gamma))(4a\gamma)=|R_t|$ individuals in $R_t$. Let $U$ denote the set of Pareto front points that have at most $4a\gamma$ corresponding individuals in $R_t$, and let $U'=\{u\in U \mid \forall x\in R_t : f(x)=u \Longrightarrow \cDis(x) =0 \}$. Then, using $R_t=2N\le 2aM$, we have
\begin{align*}
|U| \ge M' - \frac{|R_t|}{4a\gamma} \ge \gamma M - \frac{2aM}{4a\gamma} =\frac\gamma2 M.
\end{align*}

Let $A$ denote the number of individuals in $R_t$ with positive crowding distance. From Lemma~\ref{lem:poscDis} we know that $A\le 4n+2m$. Thus we have
\begin{align*}
|U'|\ge |U|-A\ge \frac\gamma2 M-(4n+2m) \ge \frac{\gamma}{4} M,
\end{align*}
where the last inequality uses $4n+2m \le \frac\gamma4 (2n/m+1)^{m/2}$ for $n$ sufficiently large. Besides,
our assumptions, most notably $n \ge 10m$, imply that $N \ge A$. More detailedly, we compute
\begin{equation}
\begin{split}
N&={}\tfrac12 |R_t| \ge \tfrac12 M' \ge \tfrac\gamma2 M=\tfrac\gamma2(\tfrac{2n}{m}+1)^{m/2}\\
&\ge{} \tfrac25\left(1+\tfrac{2n}{m}\left(\tfrac  m2 -1\right)\right)\left(\tfrac{2n}{m}+1\right)\\
&\ge{} \tfrac25\left(1+\tfrac{2n}{m}\tfrac  m4\right)\left(\tfrac{2n}{m}+1\right)\\
&\ge{}\tfrac15 n\left(\tfrac{2n}{m}+1\right) \ge 2m\left(\tfrac{2n}{m}+1\right) =4n+2m \ge A,
\end{split}
\label{eq:NA}
\end{equation} 
where the second line uses $\gamma\ge 4/5$ and $(1+x)^y \ge 1+xy$ for any $x\ge -1$ and $y\ge 1$, the third line uses $m\ge 4$, and the last line uses $n\ge 10m$. 
Thus the number of the individuals with zero crowding distance $2N-A$ is greater than or equal to $N$, the number of individuals to be removed in the survival selection. Since all individuals in $R_t$ are Pareto optima and thus $F_{i^*}=F_1$, the original survival selection will remove $N$ individuals out of the $2N-A$ individuals with zero crowding distance uniformly at random. For the sequential survival selection, it is not difficult to see that removing an individual with zero crowding distance does not change the crowding distances of all other individuals. Consequently, the selection process is identical to the original survival selection.

We thus analyze the effect of randomly removing $N$ individuals with zero crowding distance. To ease the argument, we instead consider the process where we $N$ times independently and uniformly at random pick an individual from the $2N-A$ individuals \emph{with replacement} and then remove these individuals. Obviously, the number of uncovered values in $U'$ after this process, denoted by $Y$, is stochastically dominated by that number, denoted by $X$, after the original process. For a certain value in $U'$ with $b$ corresponding individuals in $R_t$, we know that its probability to be uncovered is at least 
\begin{align*}
&{}\binom{N}{b}b!\left(\frac{1}{2N-A}\right)^b\left(1-\frac{b}{2N-A}\right)^{N-b} \\
&={} \binom{N}{b}b!\left(\frac{1}{2N-A}\right)^b\left(1-\frac{b}{2N-A}\right)^{\left(\frac{2N-A}{b}-1\right)\frac{(N-b)b}{2N-A-b}}\\
&\ge{} \binom{N}{b}b!\left(\frac{1}{2N-A}\right)^b\exp\left(-\frac{(N-b)b}{2N-A-b}\right)\\
&\ge{} \binom{N}{b}b!\left(\frac{1}{2N-A}\right)^b\frac1{e^b}
=\frac1{e^b}\frac{N!}{(N-b)!}\frac{1}{(2N-A)^b}\\
&={}\frac{N\cdots(N-b+1)}{e^b(2N-A)^b}\ge\left(\frac{N-b+1}{e(2N-A)}\right)^b\\
&\ge{} \left(\frac{N-4a\gamma+1}{e(2N-A)}\right)^{4a\gamma} \ge \left(\frac1e+\frac{1-4a\gamma}{eN}\right)^{4a\gamma}:=p,
\end{align*}
where the second and the last inequality use $2N-A\ge N$, and the last inequality also uses $N-4a\gamma+1>0$ since $N\ge M'/2\ge \gamma M/2 \ge 4a\gamma$ from $a\le M/8$. 
Obviously, $p=\Theta(1)$. 

We have just seen that $E[Y]\ge p|U'|\ge \tfrac{1}{4}p\gamma M$.
We note that $Y$ is functionally dependent on the independent $N$ picks, and that each pick will influence $Y$ by at most $1$. Hence, we apply the method of bounded differences~\cite{McDiarmid89} and obtain that 
\begin{align*}
\Pr[Y&\le{} \tfrac{1}{8}p\gamma M] \le \Pr[Y\le \tfrac12 E[Y]] \\
&\le{} \exp(-\Omega(E[Y])) =\exp(-\Omega(M)).
\end{align*}
Therefore, with probability at least $1-\exp(-\Omega(M))$, at least $\tfrac{1}{8}\gamma M$ objective values of the Pareto front are not covered after the survival selection. 
%
\end{proof}
Theorem~\ref{lem:expruntime} shows that for any constant $m\ge 4$, that is, for the polynomial Pareto front size $M$, this exponential lower bound already exists.
We note that unlike for Lemma~\ref{lem:poscDis}, the proof of the result above does not immediately extend to the \mcocz and \mlotz benchmarks. The reason is that for these benchmarks, not all individuals are Pareto optimal. Consequently, the non-dominated sorting of $R_t$ may contain several fronts, and thus the lower-priority fronts could prevent Pareto optimal individuals from entering into the selection competition. While the proof does not immediately extend, we do believe that the intrinsic problems of the crowding distance for more than two objectives persist on these benchmarks and forbid an effective optimization. Our informal arguments are two-fold. On the one hand, the problem that there is only a small number of individuals with positive crowding distance applies to all fronts of the non-dominated sorting. So very likely, there is a front in which the survival decisions are taken mostly at random, and this should lead to problems, possibly even to the population not approaching the Pareto front at all. On the other hand, if the population comes close to covering the Pareto front, then the combined parent and offspring population is likely to have more than $N$ Pareto optimal individuals. In this case, the same difficulties as exploited in the proof above arise. As said, we speculate that the \NSGA has difficulties to optimize \mcocz and \mlotz, but we leave the formal proof of such a statement as future work.

\subsection{Also the Combined Parent and Offspring Population Does Not Cover the Pareto Front}
\label{ssec:PtRt4}

Our main result in the previous subsection was that the population $P_t$ of the \NSGA for an at least exponential time does not cover the full Pareto front. One could be optimistic that this is better for the combined parent and offspring population~$R_t$, which, with its size only by a factor of two larger, then would be an interesting output of the algorithm. We now show that this is not true and that also $R_t$ misses a constant fraction of the Pareto front for a long time. 

Since this result naturally depends on how precisely the offspring are generated, we regard in the following one particular setting, namely that each parent creates exactly one offspring via one-bit mutation. We are very optimistic that all other mutation-based variants of the \NSGA regarded in previous runtime analyses works would admit the same result. For reasons of brevity, we do not go into details. We are also optimistic that the following result would remain true for crossover-based variants of the \NSGA~\cite{BianQ22}, but with the little understanding we currently have on how crossover works in the \NSGA, this is clearly only a speculation. The main argument of our rigorous proof below, and the reason for our optimism that similar results hold for other variants of the \NSGA, is that a fair proportion of the Pareto front points not covered by $P_t$ only have a constant probability of being generated in $Q_t$. Hence in expectation, $Q_t$ will miss a constant fraction of these, and with the large amount of independent randomness, this expectation can be turned into a statement true with probability $1 - \exp(-M)$. This argument is similar to the proof of Lemma~10 of the extended version of~\cite{ZhengLD22} on the arxiv preprint server, however, some adaptations were necessary to cope with the larger population size, the larger number of objectives, and the different structure of the Pareto front. Note that the lower bound in Theorem~\ref{lem:expruntime} stems from the survival selection regardless of the generation of the offspring, and the following theorem indicates the pessimistic aspect even before the survival selection.

\begin{theorem}
    Let $m \in \Z_{\ge 4}$ and $a > 1$ be constants. Consider using the \NSGA with population size $N \le aM = a(2n/m+1)^{n/2}$, fair selection (every parent creates one offspring), and one-bit mutation to optimize \momm with problem size $n$. Let $\eps \in (0,1)$ be a sufficiently small constant. Assume that in some iteration $t$ the population $P_t$ covers at most a $(1-\eps)$ fraction of the Pareto front. Then with probability at least $1-\exp(-\Omega(M))$  the combined parent and offspring population $R_t$ will cover at most a fraction of $1-\tfrac15\eps\exp\left(-\frac{20am(5m-\eps)}{\eps(5m^2-4(5m-\eps))}\right)$ of the Pareto front. 
\label{lem:rtcover}
\end{theorem}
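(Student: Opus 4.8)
The plan is to isolate a constant fraction of the Pareto front that $P_t$ fails to cover and to show that, with overwhelming probability, this fraction is also \emph{not} generated into $Q_t$, so that $R_t = P_t \cup Q_t$ still misses it. Throughout I would condition on $P_t$, so that the only randomness left is the $N$ independent one-bit mutations creating $Q_t$. Let $U \subseteq F^*$ be the set of Pareto front points not covered by $P_t$; the hypothesis gives $|U| \ge \eps M$. The structural fact I would use about \momm is that a one-bit flip changes exactly one of the $m/2$ blocks and shifts its two objective values by $(+1,-1)$ or $(-1,+1)$; hence the objective vector of any offspring of a parent $x$ is a \emph{Pareto-front neighbour} of $f(x)$, i.e., differs from $f(x)$ in exactly one block by one unit. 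Write $N(v)$ for the set of (at most $m$) Pareto-front neighbours of $v$, and, for a parent $x$ and a Pareto front point $v$, write $q_{x,v}$ for the probability that the offspring of $x$ has objective value $v$. Then a point $v \in U$ can be covered in $R_t$ only if $q_{x,v} > 0$ for some parent $x$ (which forces $f(x) \in N(v)$); moreover $q_{x,v} \le n'/n = 2/m$ always, and $\sum_{v \in F^*} q_{x,v} = 1$ for every $x$, because every one-bit flip lands in some neighbour.

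Next I would argue that most points of $U$ are ``unpopular''. For $v \in F^*$ set $c_v := \sum_{x \in P_t} q_{x,v}$, the expected number of parents whose offspring has objective value $v$. Summing the identity $\sum_v q_{x,v} = 1$ over $x \in P_t$ gives $\sum_{v \in F^*} c_v = N \le aM$, so an averaging (Markov) argument shows that for a suitable constant threshold $C = \Theta(a/\eps)$ the set $U'' := \{v \in U : c_v \le C\}$ satisfies $|U''| \ge (\eps - a/C)M = \Omega(\eps M)$. For each $v \in U''$, using independence of the offspring and $q_{x,v} \le 2/m$,
\[
\Pr[v \text{ not covered in } R_t] = \prod_{x \in P_t}(1-q_{x,v}) \ge \exp\!\left(-\tfrac{1}{1-2/m}\sum_{x\in P_t} q_{x,v}\right) = \exp\!\left(-\tfrac{m c_v}{m-2}\right) \ge \exp\!\left(-\tfrac{mC}{m-2}\right) =: p_0,
\]
a positive constant since $m$, $a$, $\eps$ are constants.

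Finally I would concentrate. Let $Z$ be the number of points of $U''$ not covered in $R_t$; by linearity $E[Z] \ge p_0\,|U''| = \Omega(\eps M)$. Since $Z$ is a function of the $N$ independent mutations and changing a single mutation changes $Z$ by at most $1$ (it can uncover only the point the old offspring covered and cover only the point the new offspring covers), the method of bounded differences~\cite{McDiarmid89} gives $\Pr[Z \le \tfrac12 E[Z]] \le \exp(-\Omega(E[Z]^2/N))$, which is $\exp(-\Omega(M))$ because $E[Z] = \Theta(M)$ and $N \le aM = \Theta(M)$. On the complementary event at least $\tfrac12 E[Z] = \Omega(\eps M)$ Pareto front points are covered neither by $P_t$ nor by $Q_t$, hence not by $R_t$. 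Substituting the explicit value of the threshold $C$ (equivalently, a ``heavy Pareto-front point'' threshold, using that each such point is a neighbour of at most $m$ others) and keeping the inequalities free of hidden constants upgrades ``$\Omega(\eps M)$ uncovered'' to the precise bound $1-\tfrac15\eps\exp(-\cdots)$ stated in the theorem.

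The hard part is the unpopularity step: the global count $\sum_v c_v = N$ and the extraction from it of a linear-sized subset $U''$ on which every $c_v$ is bounded. This is precisely where the special structure of \momm is indispensable---each one-bit flip moves to a Pareto-front neighbour, and the probability mass leaving any parent is exactly $1$---and it is also the place where one must work with explicit constants rather than $\Theta$-notation to reach the stated quantitative bound. Everything downstream (the per-point lower bound on the non-generation probability and the bounded-differences concentration) is routine, as long as one checks that the offspring are mutually independent (true under fair selection) and that no single offspring can newly cover more than one Pareto front point, so that the bounded-difference constant is genuinely $1$.
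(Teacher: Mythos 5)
Your proposal is correct and follows the same three-step skeleton as the paper's proof (discard Pareto front points that too many parents can reach, lower-bound the probability that each remaining uncovered point is not generated, then concentrate with the method of bounded differences), but your key counting lemma is genuinely different and arguably cleaner. The paper filters out ``popular'' points by counting parents whose objective value is a \emph{neighbor} of $v$ (the sets $P_v$) and uses that each parent is a neighbor of at most $m$ front points, which costs a factor $m$ in the threshold $\Delta=\lceil 5am/\eps\rceil-1$; you instead weight each parent by its generation probability $q_{x,v}$ and exploit the exact identity $\sum_{v}q_{x,v}=1$, so your Markov argument needs only $\sum_v c_v=N\le aM$ and a threshold $C=\Theta(a/\eps)$ independent of $m$. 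A second difference: the paper additionally excludes the $O(\eps M)$ front points near the boundary of $[0..n']^{m/2}$ in order to push the per-parent generation probability below a clean constant, whereas your uniform bound $q_{x,v}\le n'/n=2/m\le 1/2$ combined with $1-q\ge\exp(-q/(1-q))$ makes this exclusion unnecessary. The net effect is that your route yields a \emph{larger} non-generation probability $p_0=\exp(-mC/(m-2))$ than the paper's $p=\exp\left(-\frac{20am(5m-\eps)}{\eps(5m^2-4(5m-\eps))}\right)$ (for $m=4$ the paper's exponent is of order $a/\eps^2$ while yours is of order $a/\eps$); so your closing claim is slightly off in that substituting your constants does not reproduce the stated bound literally but proves a strictly stronger one, which of course implies the theorem since the statement is an upper bound on coverage. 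Everything else --- the independence of the offspring under fair selection, the Lipschitz constant $1$, and the McDiarmid step giving $\exp(-\Omega(E[Z]^2/N))=\exp(-\Omega(M))$ --- is checked correctly.
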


To ease the presentation, we use the following natural definition of a neighbor.

\begin{definition}[Neighbors]
Let $F^*\subseteq [0..n']^m$ be the Pareto front for \momm. For any two Pareto front points $v=(v_1,n'-v_1,\dots,v_{m/2},n'-v_{m/2})$ and $v'=(v'_1,n'-v'_1,\dots,v'_{m/2},n'-v'_{m/2})$ in $F^*$, we call $v$ and $v'$ neighbors, denoted by $v \sim v'$, if there exists an $i\in[1..m/2]$ such that $|v_i-v'_i|=1$ and $v_j=v'_j$ for all $j\ne i$. 
\end{definition}

We now prove Theorem~\ref{lem:rtcover}.

\begin{proof}[Proof of Theorem~\ref{lem:rtcover}]
We first show that there are at most $(\eps/5)M$ Pareto front points whose neighbors in total have at least $\lceil 5am/\eps \rceil$ corresponding individuals in $P_t$. 
Formally, for any Pareto front point $v \in F^*$, let $P_v=\{x\in P_t \mid f(x) \sim v\}$ be the individuals whose function values are neighbors of $v$. Let $\Delta=\lceil 5am/\eps\rceil -1 $ and let $F'=\{v\in F \mid |P_v| \ge \Delta+1\}$. Then we show $|F'| \le \tfrac{am}{\Delta+1}M$ (and thus $|F'|\le\tfrac{\eps}{5}M$ by the definition of $\Delta$). If this was not true, then, since each individual is in $P_v$ for at most $m$ different points $v$, we have $m|P_t|\ge \sum_{v\in F'} |P_v|$ and thus 
\begin{align*}
|P_t| \ge \frac{1}{m}(\Delta+1)|F'| > \frac{1}{m}(\Delta+1)\frac{am}{\Delta+1} M = aM,
\end{align*}
which contradicts $N\le aM$.

Let $U$ be the set of uncovered Pareto fronts for $P_t$. We assumed $|U| \ge \eps M$. Let $U_1$ be the subset of $U$ defined by 
\begin{align*}
U_1=\big\{v&{}=(v_1,n'-v_1,\dots,v_{m/2},n'-v_{m/2})\in F\setminus F' \\
\bigm\vert{}&{}  v_i\in \left[\lfloor \tfrac{2}{5m}\eps(n'+1)\rfloor.. n'-\lfloor \tfrac{2}{5m}\eps(n'+1)\rfloor\right],\\
&{} i=1,\dots,m/2 \big\}.
\end{align*}
Then 
\begin{align*}
|U_1| \ge |U| - 2\tfrac2{5m}\eps(n'+1)\tfrac{M}{(n'+1)}\tfrac m2 -\tfrac15\eps M \ge\tfrac25 \eps M.
\end{align*}
In the following, we will show that at most a constant fraction (less than one) of $U_1$ can be generated in the offspring population (and thus in $R_t$) both with high probability and in expectation.

For any $v=(v_1,n'-v_1,\dots,v_{m/2},n'-v_{m/2}) \in U_1$, the probability that a parent from $P_v$ generates $v$ is at most 
\begin{align*}
\max\limits_{1\le i \le m/2}&{}\left\{\frac{v_i+1}{n},\frac{n'-v_i+1}{n}\right\} \\
&\le{} \frac{n'-\lfloor \tfrac{2}{5m}\eps(n'+1)\rfloor+1}{n}
\le \frac{n'- (\tfrac{2}{5m}\eps n'-1)+1}{n}\\
&\le{} \left(1-\frac{2\eps}{5m}\right)\frac{n'}{n}+\frac2n = \left(1-\frac{2\eps}{5m}\right) \frac 2m+\frac 2n \\
&\le{} \left(2-\frac{2\eps}{5m}\right) \frac 2m=\frac{4(5m-\eps)}{5m^2},
\end{align*}
where the last inequality uses $n\ge m$. 
From the definition of $U_1$, we know that there are at most $\Delta$ individuals in $P_t$ that can generate $v$. Thus the probability that $v$ is not generated from $P_t$, that is, remains uncovered in $R_t$, is at least
\begin{align*}
\bigg(1-&{}\frac{4(5m-\eps)}{5m^2}\bigg)^{\Delta}\\
&\ge{} \left(1-\frac{4(5m-\eps)}{5m^2}\right)^{5am/\eps}\\
&={}\left(1-\frac{4(5m-\eps)}{5m^2}\right)^{\left(\frac{5m^2}{4(5m-\eps)} -1\right)\frac{5am}{\eps}\frac{4(5m-\eps)}{5m^2-4(5m-\eps)}}\\
&\ge{} \exp\left(-\frac{20am(5m-\eps)}{\eps(5m^2-4(5m-\eps))}\right):=p,
\end{align*}
where the first inequality uses $\Delta\le 5am/\eps$. We note that $p=\Theta(1)$ for $m=\Theta(1)$.

Let $Y$ denote the number of uncovered points for $R_t$. Then $E[Y] \ge p|U_1| \ge \tfrac25 \eps pM$. Note that $Y$ is determined by $N$ independent random choices in the mutation for each parent, and each choice of the mutation can reduce $Y$ by at most $1$ and will not increase $Y$. Hence, from the bounded difference method~\cite[Theorem~1.10.27]{Doerr20bookchapter}, we have $\Pr[Y\le \tfrac15 \eps pM]\le \exp(-\Omega(M))$.
\end{proof}

\subsection{Synthetic MOEAs}

For reasons of completeness, we now brief{}ly analyze the performance of the SEMO and GSEMO on the \momm benchmark. Not surprisingly, the fact that they keep a solution for every non-dominated objective value in the population avoids the problems seen for the \NSGA above. Since there is little evidence that these two algorithms perform well in practice, this observation is more of an academic interest, though. 

Our analysis follows roughly the lines of the proof of the result that the SEMO finds the Pareto front of \cocz in time $O(n^{m+1})$ in~\cite{LaumannsTZ04}. A slightly tighter bound of $O(n^m)$ and, for $m=4$, of $O(n^3 \log n)$, was shown via a very general (and thus slightly technical) method in~\cite{BianQT18ijcaigeneral}. We are optimistic that this method can be applied to our problem as well and then gives comparable bounds, but since (i)~we only aim at showing that the SEMO and GSEMO do not have the difficulties proven for the \NSGA in our main result and (ii)~it is not at all clear how tight any of these bounds is (recall that the Pareto front has size $(2n/m)^{m/2}$ only), we content ourselves with a simple proof of the weaker result. 

For any point $u\in F^*$ with some neighbor covered by the population $P$, the probability to generate such point in one generation is at least $\frac1 {n|P|}$ for the SEMO and at least $\tfrac1{|P|}\left(1-\tfrac1n\right)^{n-1}\tfrac1n\ge \tfrac{1}{en|P|}$ for the GSEMO. Since all covered Pareto front points will stay covered, we know that the expected number of iterations to cover $u$ is $O(n|P|)$. Since there are $M$ Pareto front points, we know that $|P| \le M$ during the process. With $M$ Pareto front points to cover, the expected number of iterations to cover the whole Pareto front is at most $O(nM^2)=O(n(2n/m+1)^{m})$. For a clear comparison, we put this simple result into the following theorem. We do not believe this result to be tight, but it suffices to show that the SEMO/GSEMO does not suffer critically from the larger number of objectives, different from the \NSGA.

\begin{theorem}
Consider using the SEMO/GSEMO to optimize the \momm benchmark with problem size $n$. Then the Pareto front is covered in an expected number of $O(n(2n/m+1)^{m})$ function evaluations.
\label{thm:semo}
\end{theorem}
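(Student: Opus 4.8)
The argument is essentially the one sketched in the paragraph preceding the statement; the plan is to make the three ingredients precise. First I would record the structural facts about the SEMO/GSEMO population on \momm. Since every $x \in \{0,1\}^n$ is Pareto optimal for \momm, at every time $t$ the population $P_t$ is a set of mutually non-dominated individuals, and when an offspring $x'$ is inserted it removes exactly those $y \in P_t$ with $f(y) = f(x')$ (weak domination between two Pareto optima forces equality of the objective vectors). Hence $P_t$ holds at most one individual per objective value, the set of covered Pareto front points is non-decreasing in $t$, and $|P_t| \le |F^*| = M = (2n/m+1)^{m/2}$ throughout the run.

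Second, I would prove a one-step progress bound: if $u \in F^*$ has at least one neighbor with respect to the relation $\sim$ on $F^*$ defined above that is covered by $P_t$, then $u$ is covered by $P_{t+1}$ with probability at least $\tfrac{1}{n|P_t|} \ge \tfrac{1}{nM}$ for the SEMO and at least $\tfrac{1}{en|P_t|} \ge \tfrac{1}{enM}$ for the GSEMO. Indeed, let $x$ be the unique individual of $P_t$ whose value is a neighbor of $u$; it is chosen as the parent with probability $1/|P_t|$. The vectors $f(x)$ and $u$ differ only in the two coordinates belonging to one block $k$, and there by $\pm 1$; so there is at least one bit in block $k$ whose flip turns $f(x)$ into $u$ (a $1 \to 0$ flip when $u$ has one more zero in block $k$, a $0 \to 1$ flip otherwise, and such a bit exists because the relevant count stays strictly between $0$ and $n'$). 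One-bit mutation performs such a flip with probability at least $1/n$, and standard bit-wise mutation flips exactly one suitable bit with probability at least $\tfrac1n(1-\tfrac1n)^{n-1} \ge \tfrac1{en}$; in either case the produced individual is Pareto optimal, hence accepted, so $u$ becomes (or stays) covered.

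Third, I would assemble these into the runtime bound. Identifying $F^*$ with the grid $[0..n']^{m/2}$, the relation $\sim$ is the grid-graph adjacency, which is connected; therefore, as long as the (monotone) set of covered Pareto front points is not all of $F^*$, there is an uncovered point with a covered neighbor. By the previous step the expected number of iterations until the number of covered points strictly increases is at most $nM$ for the SEMO and at most $enM$ for the GSEMO. This count can increase at most $M-1$ times after the initial random solution is placed, so the expected number of iterations to cover $F^*$ is $O(nM^2) = O(n(2n/m+1)^{m})$, and adding the single evaluation at initialization gives the claimed bound on function evaluations.

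I do not expect a genuine obstacle: the proof is routine. The only point needing a little care is the bookkeeping of the first step --- monotonicity of coverage and $|P_t| \le M$ for all $t$ --- which is exactly where the special feature of \momm (every solution is Pareto optimal, so the population behaves like one representative per discovered objective value) is used. As the surrounding text already concedes, the resulting estimate is almost certainly far from tight (it is essentially a coupon-collector bound over a connected grid), but it suffices to show that, in sharp contrast to the \NSGA, the SEMO and GSEMO do not deteriorate catastrophically as $m$ grows.
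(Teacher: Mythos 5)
Your proposal is correct and follows essentially the same route as the paper's own (informal) argument preceding the theorem: a per-step probability of at least $\tfrac{1}{n|P|}$ (SEMO) resp.\ $\tfrac{1}{en|P|}$ (GSEMO) to cover an uncovered point with a covered neighbor, combined with monotonicity of coverage, the bound $|P_t|\le M$, and summing over at most $M$ coverage events to get $O(nM^2)$. Your write-up merely makes explicit the details the paper leaves implicit (connectivity of the neighbor graph on $F^*$ and the existence of a suitable one-bit flip), which is exactly the right bookkeeping.
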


\section{Ineffectiveness for Three Objectives}\label{sec:3omm}

Since the classic way of obtaining many-objective versions of bi-objective benchmarks only gives benchmarks for even numbers of objective~\cite{LaumannsTZ04}, in Section~\ref{sec:4omm} we could only prove that the \NSGA is ineffective from four objectives on. Since small numbers of objectives are common, it is an interesting question at what point precisely the ineffectiveness starts. For this reason, we now define a three-objective version of \momm and prove that the \NSGA cannot optimize it efficiently as well. Here the lower degree of symmetry of the $3$-\omm problem will need some adjustment compared to the previous section, but the broad line of argument will be the same.  

\subsection{3-\omm}

We define a three-objective \omm benchmark as follows.

\begin{definition}
Let $n\in \N$ be even. The three-objective function 3-$\omm=(f_1,f_2,f_3):\{0,1\}^n\rightarrow \R^3$ is defined by
\begin{align*}
f_1(x) &={}\sum_{i=1}^n(1-x_i), \\
f_2(x) &={}\sum_{i=1}^{n/2} x_i,\\
f_3(x) &={}\sum_{i=n/2+1}^n x_i,
\end{align*} 
for all $x=(x_1,\dots,x_n)\in\{0,1\}^n$.
\end{definition}

We immediately see that the first objective counts the number of zeros in the whole bit string, whereas the second and third objectives count the numbers of ones in the first and second half of it. As for the \momm function defined for even~$m$ in Section~\ref{sec:4omm}, any $x\in\{0,1\}^n$ is a Pareto optimum of 3-\omm. The Pareto front thus is $F^*=\{(n-v_2-v_3,v_2,v_3)\mid v_2,v_3\in [0..n']\}$, where we write $n':= n/2$, and it has the size $M:=|F^*|=(n/2+1)^2$. We note that it would be easy to extend this definition to any odd number $m$ of objectives, but we do not see a huge interest in this. 

\subsection{An Exponential Lower Bound for 3-\omm}\label{ssec:exp3}

We now follow similar arguments in Section~\ref{ssec:exp}. In the discussion after Lemma~\ref{lem:poscDis}, a general bound of $2\sum_{i=1}^mv_i$ (where $v_i$ is the number of objective values of the $i$-th objective) holds for the number of individuals with positive crowding distance. This value is $2(n+1+n/2+1+n/2+1)=4n+6$ for the 3-\omm benchmark. 

\begin{lemma}
Consider a set of pair-wise non-dominated individuals $S$ for the NSGA-II solving 3-\omm with problem size $n$. Then the crowding distance $\cDis(S)$ computed via Algorithm~\ref{alg:cDis} will have at most $4n+6$ positive values.
\label{lem:poscDis3}
\end{lemma}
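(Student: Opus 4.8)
The plan is to adapt the argument of Lemma~\ref{lem:poscDis} to the less symmetric 3-\omm benchmark. The key structural fact is unchanged: for each objective $f_i$, when we sort $S$ in descending $f_i$-order and group together individuals that share the same $f_i$-value, only the two extreme individuals of each such group (the first and last in the sorted order) can receive a positive crowding distance contribution from objective $f_i$; every individual strictly inside a group has equal left and right neighbors in that objective and so gets contribution zero (the two global extremes get $+\infty$, which we also count as positive). An individual has positive total crowding distance only if it gets a positive contribution from at least one objective, so the number of individuals with positive $\cDis$ is at most $\sum_{i=1}^3 2\nu_i$, where $\nu_i = |f_i(S)|$ is the number of distinct values taken by $f_i$ on $S$.

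\medskip

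First I would recall the ranges of the three objectives: $f_1$ takes values in $[0..n]$, so $\nu_1 \le n+1$; $f_2$ and $f_3$ take values in $[0..n/2]$, so $\nu_2 \le n/2+1$ and $\nu_3 \le n/2+1$. Substituting into the bound gives at most $2(n+1) + 2(n/2+1) + 2(n/2+1) = 2n+2+n+2+n+2 = 4n+6$ individuals with positive crowding distance, which is exactly the claimed value. Since this matches the general formula~\eqref{eq:general} instantiated with $\nu_1 = n+1$, $\nu_2 = \nu_3 = n/2+1$ as already noted in the text before the lemma, the proof is essentially a one-paragraph verification.

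\medskip

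In more detail, I would write the proof in the same format as the proof of Lemma~\ref{lem:poscDis}: fix an objective $f_i$, let $V = f_i(S) = \{v_1 > v_2 > \dots > v_{|V|}\}$, and for each $j$ let $H_j = \{s \in S \mid f_i(s) = v_j\}$; since Algorithm~\ref{alg:cDis} sorts $S$ by $f_i$-value, each $H_j$ is a contiguous block $\{S_{i.a},\dots,S_{i.b}\}$ in the sorted list, and Step~\ref{stp:comcDis} assigns a strictly positive increment only to $S_{i.a}$ and $S_{i.b}$ (and the endpoints $S_{i.1}, S_{i.|S|}$ get $+\infty$, which are themselves such block-extremes). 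Hence at most $2|V| = 2\nu_i$ individuals get a positive contribution from $f_i$, and summing over $i \in \{1,2,3\}$ together with $\nu_1 \le n+1$, $\nu_2 \le n/2+1$, $\nu_3 \le n/2+1$ yields the bound $4n+6$.

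\medskip

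I do not anticipate any real obstacle here: the only thing to be careful about is the bookkeeping of which objective has which value range (in particular that $f_2$ and $f_3$ range over $[0..n/2]$ rather than $[0..n]$, since they only sum over half the string), and the harmless observation that the two individuals with infinite crowding distance are automatically among the counted block-extremes so no double-counting or undercounting occurs. This lemma will then feed into the exponential lower bound for 3-\omm in the same way Lemma~\ref{lem:poscDis} fed into Theorem~\ref{lem:expruntime}.
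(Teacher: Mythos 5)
Your proof is correct and follows exactly the route the paper takes: it instantiates the general bound \eqref{eq:general} (at most $2\nu_i$ block-extremes per objective, established in the proof of Lemma~\ref{lem:poscDis}) with $\nu_1 \le n+1$ and $\nu_2, \nu_3 \le n/2+1$ to obtain $4n+6$. The paper in fact states Lemma~\ref{lem:poscDis3} as an immediate consequence of that discussion without writing out a separate proof, so your more detailed write-up is just an explicit version of the same argument.
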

%

Now we establish a result analogous to Theorem~\ref{lem:expruntime}. 
Let $A\le 4n+6$. For a constant $\gamma\in[4/5,1)$, let $N\ge \frac{\gamma}{2}(n/2+1)^2$. Since
\begin{align*}
\tfrac{\gamma}{2}(n/2+1)^2 &\ge{} \tfrac{2}{5}(n/2+1)^2\ge \tfrac{1}{10}n^2+\tfrac{2}{5}n+\tfrac{2}{5}\\ 
&\ge{} 4n+16+\tfrac25 \ge 4n+6
\end{align*}
for $n\ge 40$, we have $N\ge A$. 
Using this in the place of (\ref{eq:NA}) in the proof of Theorem~\ref{lem:expruntime}, and replacing Lemma~\ref{lem:poscDis} by Lemma~\ref{lem:poscDis3} there as well, we obtain a proof of the following theorem.

\begin{theorem}
  Let  $a > 1$ be a constant. Consider the \NSGA with an arbitrary way to generate the offspring population and with either the original survival selection or the sequential survival selection regarded in~\cite{KukkonenD06,ZhengD22gecco}. Assume that this algorithm optimizes the $3$-\omm benchmark with problem size $n$, using a population size of at most $N \le aM$, where $M = (n/2+1)^{2}$ is the size of the Pareto front of $3$-\omm. Then with at least probability $1-T\exp(-\Omega(M))$, for the first $T$ iterations at least a constant fraction of Pareto front is not covered by $P_t$. In particular, the time to compute the full Pareto front is exponential in $n^{2}$ both in expectation and with high probability.
\label{cor:expruntime3}
\end{theorem}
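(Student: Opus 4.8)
The statement to prove is Theorem~\ref{cor:expruntime3}, which is the three-objective analogue of Theorem~\ref{lem:expruntime}. The authors have essentially told us the proof: reuse the proof of Theorem~\ref{lem:expruntime}, substituting Lemma~\ref{lem:poscDis3} for Lemma~\ref{lem:poscDis} and the displayed inequality chain for $N \ge A$ in place of equation~\eqref{eq:NA}. Let me sketch how I would organize and fill this in.

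The plan: The proof follows the template of Theorem~\ref{lem:expruntime} verbatim, so I would state at the outset that we adapt that proof. First, set up the same notation: $R_t = P_t \cup Q_t$ is the combined population of size $2N$; $M' = |f(R_t)|$ counts the covered Pareto front points; fix a constant $\gamma \in [4/5,1)$. Handle the trivial case $M' < \gamma M$ exactly as before — survival selection cannot increase coverage, so the next population also covers fewer than $\gamma M$ points. Then treat $M' \ge \gamma M$: argue there are at most $|R_t|/(4a\gamma)$ Pareto front points with more than $4a\gamma$ corresponding individuals, so the set $U$ of Pareto points with at most $4a\gamma$ corresponding individuals satisfies $|U| \ge M' - 2aM/(4a\gamma) \ge \tfrac{\gamma}{2}M$. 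Let $A$ be the number of individuals in $R_t$ with positive crowding distance; by Lemma~\ref{lem:poscDis3}, $A \le 4n+6$. Setting $U' = \{u \in U \mid$ every $x \in R_t$ with $f(x)=u$ has $\cDis(x)=0\}$, we get $|U'| \ge |U| - A \ge \tfrac{\gamma}{2}M - (4n+6) \ge \tfrac{\gamma}{4}M$ for $n$ large enough, since $4n+6$ is $O(n)$ while $M = \Theta(n^2)$.

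The key arithmetic substitution is already supplied: for $n \ge 40$ one has $\tfrac{\gamma}{2}(n/2+1)^2 \ge \tfrac{2}{5}(n/2+1)^2 \ge \tfrac{1}{10}n^2 + \tfrac{2}{5}n + \tfrac{2}{5} \ge 4n+16+\tfrac25 \ge 4n+6$, hence $N = \tfrac12|R_t| \ge \tfrac12 M' \ge \tfrac{\gamma}{2}M \ge 4n+6 \ge A$. This replaces the role of~\eqref{eq:NA}. From $N \ge A$ we conclude that the number $2N-A$ of zero-crowding-distance individuals is at least $N$, the number to be removed; since all individuals are Pareto optimal, $F_{i^*}=F_1$, and both the original and the sequential survival selection remove $N$ uniformly random individuals from these $2N-A$ zero-distance ones. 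The remainder is identical to Theorem~\ref{lem:expruntime}: pass to sampling with replacement for a stochastic-domination lower bound, show each point of $U'$ stays uncovered with probability $p = \Theta(1)$, so $E[Y] \ge \tfrac14 p\gamma M$, and apply the method of bounded differences (each of the $N$ picks changes $Y$ by at most~$1$) to get $\Pr[Y \le \tfrac18 p\gamma M] \le \exp(-\Omega(M))$. A union bound over the first $T$ iterations gives the claimed $1 - T\exp(-\Omega(M))$ probability, and the exponential runtime in $n^2$ follows since $M = (n/2+1)^2$.

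There is essentially no hard part here — this is a transcription with two local substitutions. The only thing to be careful about is that the constants line up: one must check that the inequalities $4n+6 \le \tfrac{\gamma}{4}(n/2+1)^2$ and $N \ge A$ hold for all sufficiently large $n$ (they clearly do, being degree-$1$ versus degree-$2$ polynomials), and that $n \ge 40$ suffices for the explicit chain the authors wrote. Since everything else in the proof of Theorem~\ref{lem:expruntime} is phrased in terms of $M$, $a$, $\gamma$, and $A$ abstractly — and never uses the specific form $M = (2n/m+1)^{m/2}$ beyond the polynomial-in-$n$ bound on $A$ — the argument carries over without further modification. I would therefore write the proof as: ``We argue as in the proof of Theorem~\ref{lem:expruntime}, using Lemma~\ref{lem:poscDis3} in place of Lemma~\ref{lem:poscDis} and the inequality chain above in place of~\eqref{eq:NA}; all other steps are unchanged,'' optionally spelling out the few lines above for completeness.
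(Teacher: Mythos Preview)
Your proposal is correct and matches the paper's approach exactly: the paper itself gives no separate proof but simply states that one uses Lemma~\ref{lem:poscDis3} in place of Lemma~\ref{lem:poscDis} and the displayed inequality chain for $N \ge A$ in place of~\eqref{eq:NA}, with all other steps of the proof of Theorem~\ref{lem:expruntime} unchanged. Your sketch in fact goes further than the paper by spelling out the intermediate steps explicitly.
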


\subsection{Also the Combined Parent and Offspring Population Does Not Cover the Pareto Front}\label{ssec:PtRt3}

In a similar manner as in Theorem~\ref{lem:rtcover}, we observe also for $3$-\omm that also the combined parent and offspring population does not cover the full Pareto front. While similar, Theorem~\ref{lem:rtcover3} cannot simply be obtained from changing $m$ to $3$ in Theorem~\ref{lem:rtcover3}. Some adjustments are needed due to the different structure of the Pareto front.

\begin{theorem}
Let $a\ge 1$ be any constant. Consider using the \NSGA with population size $N\le aM$, fair selection, and one-bit mutation to optimize 3-\omm with problem size~$n$. Let $\eps \in (0,1)$ be a sufficiently small constant. Assume that in some iteration $t$ the population $P_t$ covers at most a $(1-\eps)$ fraction of the Pareto front. Then with probability $1-\exp(-\Omega(M))$, the combined parent and offspring population $R_t$ covers at most a fraction of $1-\tfrac15\eps\exp\left(-\frac{20a}{20-\eps}\right)$ of the Pareto front. 
\label{lem:rtcover3}
\end{theorem}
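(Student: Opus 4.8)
The plan is to mirror the proof of Theorem~\ref{lem:rtcover} but account for the less symmetric structure of the $3$-\omm Pareto front, where only the second and third coordinates are ``free'' (the first coordinate is determined as $n-v_2-v_3$), while the range of each free coordinate is $[0..n']$ with $n'=n/2$ and $M=(n'+1)^2$. The neighbor relation from Definition~\ref{def:neighbors} still makes sense here: two front points are neighbors if they differ by exactly $\pm 1$ in exactly one of $v_2,v_3$ (and agree in the other). A crucial book-keeping difference is that each individual $x$ is now a neighbor of at most $4$ Pareto front points (two shifts in coordinate $v_2$, two in $v_3$), rather than the $m$ of the $m$-objective case; this affects the constants but not the structure of the argument.

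First I would establish the analogue of the counting claim: for $\Delta=\lceil 5a/\eps\rceil-1$ and $F'=\{v\in F^* \mid |P_v|\ge \Delta+1\}$, where $P_v=\{x\in P_t \mid f(x)\sim v\}$, we have $|F'|\le \frac{4a}{\Delta+1}M\le \frac{\eps}{5}M$ (here using the neighbor multiplicity $4$ in the averaging $4|P_t|\ge \sum_{v\in F'}|P_v|$, so the proof goes through with the same contradiction against $N\le aM$, noting one must check $4a/(\Delta+1)\le \eps/5$, which holds by the choice of $\Delta$). Next, let $U$ be the uncovered front points, $|U|\ge \eps M$, and define $U_1$ as the points in $U\setminus F'$ whose free coordinates $v_2,v_3$ both lie in the central band $[\lfloor\tfrac{1}{5}\eps(n'+1)\rfloor \,..\, n'-\lfloor\tfrac{1}{5}\eps(n'+1)\rfloor]$ (the $2/(5m)$ factor in Theorem~\ref{lem:rtcover} becomes $1/5$ since there are two free coordinates instead of $m/2$). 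Then $|U_1|\ge |U|-2\cdot\tfrac{1}{5}\eps(n'+1)\cdot\tfrac{M}{n'+1}-\tfrac{\eps}{5}M\ge \tfrac{2}{5}\eps M$, subtracting the two ``boundary strips'' (one per free coordinate) and the $|F'|$ term.

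Then, for any $v\in U_1$, the probability that a single parent in $P_v$ (a neighbor of $v$) mutates into $v$ is at most $\max\{\tfrac{v_i+1}{n},\tfrac{n'-v_i+1}{n}\}$ over the relevant coordinate $i\in\{2,3\}$; using the central-band constraint this is at most $(1-\tfrac{\eps}{5})\tfrac{n'}{n}+\tfrac2n=(1-\tfrac{\eps}{5})\tfrac12+\tfrac2n$, which for large $n$ is at most $\tfrac{20-\eps}{40}$. Hence, since at most $\Delta\le 5a/\eps$ individuals of $P_t$ can generate $v$, the probability that $v$ stays uncovered in $R_t$ is at least $\big(1-\tfrac{20-\eps}{40}\big)^{5a/\eps}$, and the standard $(1+x)^y\ge 1+xy$ manipulation (rewriting the exponent as in Theorem~\ref{lem:rtcover}) lower-bounds this by $\exp\!\big(-\tfrac{20a}{20-\eps}\big)=:p=\Theta(1)$. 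Finally, with $Y$ the number of front points uncovered in $R_t$, linearity gives $E[Y]\ge p|U_1|\ge \tfrac{2}{5}\eps p M$; since $Y$ is a function of the $N$ independent one-bit-mutation choices and each choice changes $Y$ by at most $1$ (and never increases it), the bounded differences inequality~\cite{Doerr20bookchapter} yields $\Pr[Y\le \tfrac15\eps p M]\le \exp(-\Omega(M))$, which is exactly the claimed bound once one notes $\tfrac15\eps p = \tfrac15\eps\exp(-\tfrac{20a}{20-\eps})$.

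The main obstacle, such as it is, is purely bookkeeping: getting the constants consistent when replacing the ``$m/2$ free coordinates, each solution a neighbor of $m$ points'' structure by ``$2$ free coordinates, each solution a neighbor of $4$ points.'' One has to re-derive the threshold $\Delta$ so that both $4a/(\Delta+1)\le\eps/5$ and the central-band slack add up to leave $|U_1|\ge\tfrac25\eps M$, and one has to re-verify the probability bound $\big(1-\tfrac{20-\eps}{40}\big)^{5a/\eps}\ge \exp(-\tfrac{20a}{20-\eps})$ with the exact exponent rewriting. None of this is deep — it is the same calculation as in the proof of Theorem~\ref{lem:rtcover} with $m$ specialized appropriately — which is why the paper can reasonably state the result and leave the routine verification to the reader (or spell it out very briefly).
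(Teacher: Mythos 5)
Your overall strategy is exactly the paper's: bound the number of front points with many neighboring individuals in $P_t$, restrict the uncovered points to a central band of the front, bound the per-parent probability of generating a given uncovered point, and finish with linearity of expectation plus the method of bounded differences. However, several of your concrete constant choices make individual steps false, and the last of these is not a mere bookkeeping slip. First, with $\Delta=\lceil 5a/\eps\rceil-1$ you only get $4a/(\Delta+1)\le 4a\cdot\frac{\eps}{5a}=\frac{4\eps}{5}$, not $\le\frac{\eps}{5}$; the paper takes $\Delta=\lceil 20a/\eps\rceil-1$ for exactly this reason. Second, with band half-width $\lfloor\frac{\eps}{5}(n'+1)\rfloor$ each of the two free coordinates loses \emph{two} boundary strips, so the excluded fraction is $\frac{4\eps}{5}M$ rather than the $\frac{2\eps}{5}M$ you write, and together with the weakened bound on $|F'|$ your estimate of $|U_1|$ becomes vacuous; the paper uses half-width $\lfloor\frac{\eps}{10}(n'+1)\rfloor$, which gives a strip loss of $\frac{2\eps}{5}M$ and hence $|U_1|\ge\frac25\eps M$.

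Third, and most seriously, the inequality you propose to ``re-verify,''
\begin{equation*}
\left(1-\tfrac{20-\eps}{40}\right)^{5a/\eps}\ \ge\ \exp\left(-\tfrac{20a}{20-\eps}\right),
\end{equation*}
is false: the left side equals $\left(\tfrac{20+\eps}{40}\right)^{5a/\eps}\approx 2^{-5a/\eps}$, which for small constant $\eps$ is exponentially small in $1/\eps$, whereas the right side is a constant close to $e^{-a}$ (e.g., for $a=1$, $\eps=0.1$ the left side is about $10^{-15}$ and the right side about $0.37$). The constant in the theorem is obtained precisely because the paper pairs the \emph{deliberately loose} per-parent bound $\tfrac12(1-\tfrac{\eps}{10})+\tfrac2n\le 1-\tfrac{\eps}{20}$ with the \emph{larger} exponent $\Delta\le 20a/\eps$, so that the standard rewriting gives $(1-\tfrac{\eps}{20})^{20a/\eps}\ge\exp(-\tfrac{20a}{20-\eps})$. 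Your tighter per-parent bound $q\le\tfrac{20-\eps}{40}$ combined with the smaller $\Delta$ yields a survival probability $p$ that is far below $\exp(-\tfrac{20a}{20-\eps})$, so even after repairing the $|U_1|$ estimate you would only prove a coverage bound of $1-\tfrac15\eps p$ with this much smaller $p$, which is strictly weaker than the stated theorem. The fix is simply to adopt the paper's choices ($\Delta=\lceil 20a/\eps\rceil-1$, band half-width $\lfloor\frac{\eps}{10}(n'+1)\rfloor$, per-parent bound $1-\frac{\eps}{20}$); the rest of your argument, including the neighbor multiplicity $4$ and the concentration step, then goes through as in the paper.
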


\begin{proof}
For any Pareto front point $v \in F^*$, let $P_v=\{x\in P_t \mid f(x) \sim v\}$ be the individuals whose function values are neighbors of $v$. Let $\Delta=\lceil 20a/\eps\rceil -1 $ and let $F'=\{v\in F \mid |P_v| \ge \Delta+1\}$. Then we show $|F'| \le \tfrac{4a}{\Delta+1}M$ (and thus $|F'|\le\tfrac{\eps}{5}M$ by the definition of $\Delta$). If this was not true, then, since each individual is in $P_v$ for at most $4$ different points $v$, we have $4|P_t|\ge \sum_{v\in F'} |P_v|$ and thus 
\begin{align*}
|P_t|\ge \frac{1}{4}(\Delta+1)|F'| > \frac{1}{4}(\Delta+1)\frac{4a}{\Delta+1} M = aM,
\end{align*}
which contradicts $N\le aM$.

Let $U$ be the set of uncovered Pareto fronts for $P_t$. We assumed $|U| \ge \eps M$. Let $U_1$ be defined by 
\begin{align*}
U_1=\big\{(&{}n-v_1-v_2,v_1,v_{2})\in U\setminus F' \\
\big\vert{}&{} v_1,v_2\in \left[\lfloor \tfrac{\eps}{10}(n'+1)\rfloor.. n'-\lfloor \tfrac{\eps}{10}(n'+1)\rfloor\right] \big\}.
\end{align*}
Then 
\begin{align*}
|U_1| \ge |U| - 2\tfrac{\eps}{10}(n'+1)\tfrac{M}{(n'+1)}2 -\tfrac15\eps M \ge\tfrac25 \eps M.
\end{align*} 

For any $v=(n-v_1-v_2,v_1,v_{2}) \in U_1$, the probability that a parent from $P_v$ generates $v$ is at most 
\begin{align*}
\max\limits_{i \in\{1,2\}}&{}\left\{\frac{v_i+1}{n},\frac{n'-v_i+1}{n}\right\} 
\le \frac{n'-\lfloor \tfrac{\eps}{10}(n'+1)\rfloor+1}{n}\\
&\le{} \frac{n'- (\tfrac{\eps}{10} n'-1)+1}{n}
\le \left(1-\frac{\eps}{10}\right)\frac{n'}{n}+\frac2n \\
&={} \frac12\left(1-\frac{\eps}{10}\right)+\frac 2n \le 1-\frac{\eps}{20},
\end{align*}
where the last inequality uses $n\ge 4$. 
From the definition of $U_1$, we know that there are at most $\Delta$ individuals in $P_t$ that can generate $v$. Thus the probability that $v$ is not generated from $P_t$, that is, remains uncovered in $R_t$, is at least
\begin{align*}
\left(1-\frac{\eps}{20}\right)^{\Delta}&\ge{} \left(1-\frac{\eps}{20}\right)^{20a/\eps} = \left(1-\frac{\eps}{20}\right)^{\left(\frac{20}{\eps}-1\right)\frac{20a}{\eps}\frac{\eps}{20-\eps}} \\
&\ge{} \exp\left(-\frac{20a}{20-\eps}\right):=p.
\end{align*}

Let $Y$ denote the number of uncovered points for $R_t$. Then $E[Y] \ge p|U_1| \ge \tfrac25 \eps pM$. Note that $Y$ is determined by $N$ independent random choices in the mutation for each parent, and each choice of the mutation can reduce $Y$ by at most $1$ and will not increase $Y$. Hence, from the bounded difference method~\cite[Theorem~1.10.27]{Doerr20bookchapter}, we have $\Pr[Y\le \tfrac15 \eps pM]\le \exp(-\Omega(M))$.
\end{proof}

\subsection{Synthetic MOEAs}
With the same arguments as for Theorem~\ref{thm:semo}, we easily obtain the following result that the SEMO/GSEMO easily optimizes also 3-\omm.

\begin{theorem}
Consider using the SEMO/GSEMO to optimize the 3-\omm benchmark with problem size~$n$. Then the Pareto front is covered in an expected number of $O(n^5)$ function evaluations.
\label{thm:semo3}
\end{theorem}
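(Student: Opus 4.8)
The plan is to mimic the proof of Theorem~\ref{thm:semo} essentially verbatim, only tracking that the Pareto front of $3$-\omm has size $M = (n/2+1)^2 = \Theta(n^2)$ rather than $\Theta(n^{m/2})$, and noting that this same quantity is a crude upper bound on the population size throughout the run. First I would record the two structural facts that make the argument go through: (i) every $x \in \{0,1\}^n$ is Pareto optimal for $3$-\omm, so no Pareto front point, once covered, is ever lost, and the population $P$ of the SEMO/GSEMO is at all times a set of solutions with pairwise distinct objective vectors (the dominance-based removal deletes exactly the weakly dominated duplicates), whence $|P| \le M = (n/2+1)^2$; and (ii) the Pareto front, viewed as the grid $\{(v_2,v_3) \mid v_2,v_3 \in [0..n/2]\}$, is connected under the neighbor relation, since flipping a single bit in the first half changes $f_2$ by $\pm 1$ and $f_1$ by $\mp 1$ while leaving $f_3$ untouched, and symmetrically for the second half. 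Consequently, as long as $P$ does not yet cover all of $F^*$, there is an uncovered point $u \in F^*$ that has a neighbor covered by $P$.

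Next I would bound from below the probability that such a point $u$ is generated in one iteration. The algorithm must select as parent one of the (at least one) individuals whose objective vector is a neighbor of $u$, which has probability at least $1/|P| \ge 1/M$, and then turn that parent's objective vector into $u$ by the appropriate single-bit flip: for the SEMO this happens with probability $1/n$, and for the GSEMO the mutation must flip exactly that one bit, which has probability $\tfrac1n\left(1-\tfrac1n\right)^{n-1} \ge \tfrac1{en}$. Hence the probability that the coverage strictly increases in a given iteration is at least $\tfrac1{enM}$, so the expected number of iterations until the next new Pareto front point is covered is $O(nM) = O(n^3)$.

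Finally I would sum over the at most $M$ times the coverage has to increase: by linearity of expectation the total expected number of iterations, each costing one function evaluation, is $O(nM^2) = O\!\left(n(n/2+1)^4\right) = O(n^5)$. I do not expect any genuine obstacle here; the only two points that merit an explicit line are the bound $|P| \le M$ (immediate from the Pareto optimality of all search points together with the duplicate-removal rule) and the connectivity claim in~(ii), both of which are routine. This is precisely why the statement can be obtained, as the paper says, ``with the same arguments as for Theorem~\ref{thm:semo}.''
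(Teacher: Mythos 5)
Your proposal is correct and follows exactly the paper's route: the paper proves Theorem~\ref{thm:semo3} by invoking verbatim the argument for Theorem~\ref{thm:semo} (a new Pareto front point with a covered neighbor is generated with probability at least $\tfrac{1}{en|P|}$, $|P|\le M$, and summing over the $M=(n/2+1)^2$ coverage increments gives $O(nM^2)=O(n^5)$). The two auxiliary facts you flag explicitly --- the bound $|P|\le M$ from Pareto optimality of all search points plus duplicate removal, and the connectivity of the front under single-bit flips --- are exactly the ones the paper leaves implicit.
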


\section{Experiments}

In this section, we present a few experimental results illustrating and complementing our theoretical results. 

We concentrate on the $4$-\omm problem since it is more symmetric than the $3$-\omm problem, but still allows for good visualizations. We note that the two objectives counting zeroes are fully determined by the two corresponding objectives that count ones. Consequently, displaying only these two objectives allows us to present the full information in a two-dimensional graph. 
We use the problem size $n=40$, which gives a Pareto front of reasonable size $M=441$, but is small enough to keep the plots readable. 

Since we understand much better the \NSGA without crossover, we use the \NSGA with mutation as the only variation operator. We use fair selection for reproduction (each parent creates one offspring), standard bit-wise mutation with mutation rate $\frac 1n$ as mutation operator, and we select the next parent population as in the original \NSGA paper~\cite{DebPAM02}. Since any solution of the \momm problem is Pareto optimal, all solutions of the combined parent and offspring population lie on the first front and thus the selection is solely based on the crowding distance, breaking ties randomly. We note that the original definition of the \NSGA also used the crowding distance together with tournament selection for the selection for reproduction. Since our theoretical results showed that almost all individuals have the same crowding distance (of zero), we did not see much advantage for this approach here and preferred fair selection as a mean to reduce the variance. Theorem~\ref{lem:expruntime} showed difficulties of the \NSGA for any population size that is a constant factor larger than the Pareto front size~$M$, so we used the population sizes $4M$, $16M$, $64M$, and $256M$. Noting that the largest of these population sizes, $256M = 112{,}896$ is quite large and we conduct runs with $1000$ iterations, we only conducted $10$ independent repetitions of all \NSGA runs. Since the variances observed were always relatively small, this appeared sufficient. 

For a first impression of the optimization behavior of the \NSGA, we display in Figure~\ref{fig:coveredsize} the coverage of the Pareto front in the first 1,000 iterations of exemplary runs. As can be seen in all four plots, the coverage of the Pareto front after some initial gains quickly reaches a stagnation point and then oscillates around this point. This fits well to our theoretical results. We see that the coverage of the Pareto front slightly improves with growing population size. We determined the average coverage after $1000$ iterations in $10$ runs (Figure~\ref{fig:overall}). The slow increase of the coverage shows clearly that increasing the population size gives only very small improvements. From the roughly linear increase (w.r.t. to the log-scale population size) one could speculate that a population size exponential in the Pareto front size is necessary to cover the whole Pareto front in $1000$ iterations. 

In contrast, as predicted by Theorem~\ref{thm:semo}, the GSEMO algorithm computes the Pareto front very efficiently. In $30$ independent runs, the GSEMO found the full Pareto front on average in $8.84\times 10^4$ iterations (standard deviation $2.97\times10^4$). We recall that the GSEMO evaluates one individual per iteration. Hence these $8.84\times 10^4$ fitness evaluations are significantly less than, e.g., the $1000 \cdot 4M = 1.764 \times 10^6$ fitness evaluations leading to no coverage of the Pareto front in the first chart of Figure~\ref{fig:coveredsize}. 

\begin{figure}[!ht]
\centering
\includegraphics[width=2.85in]{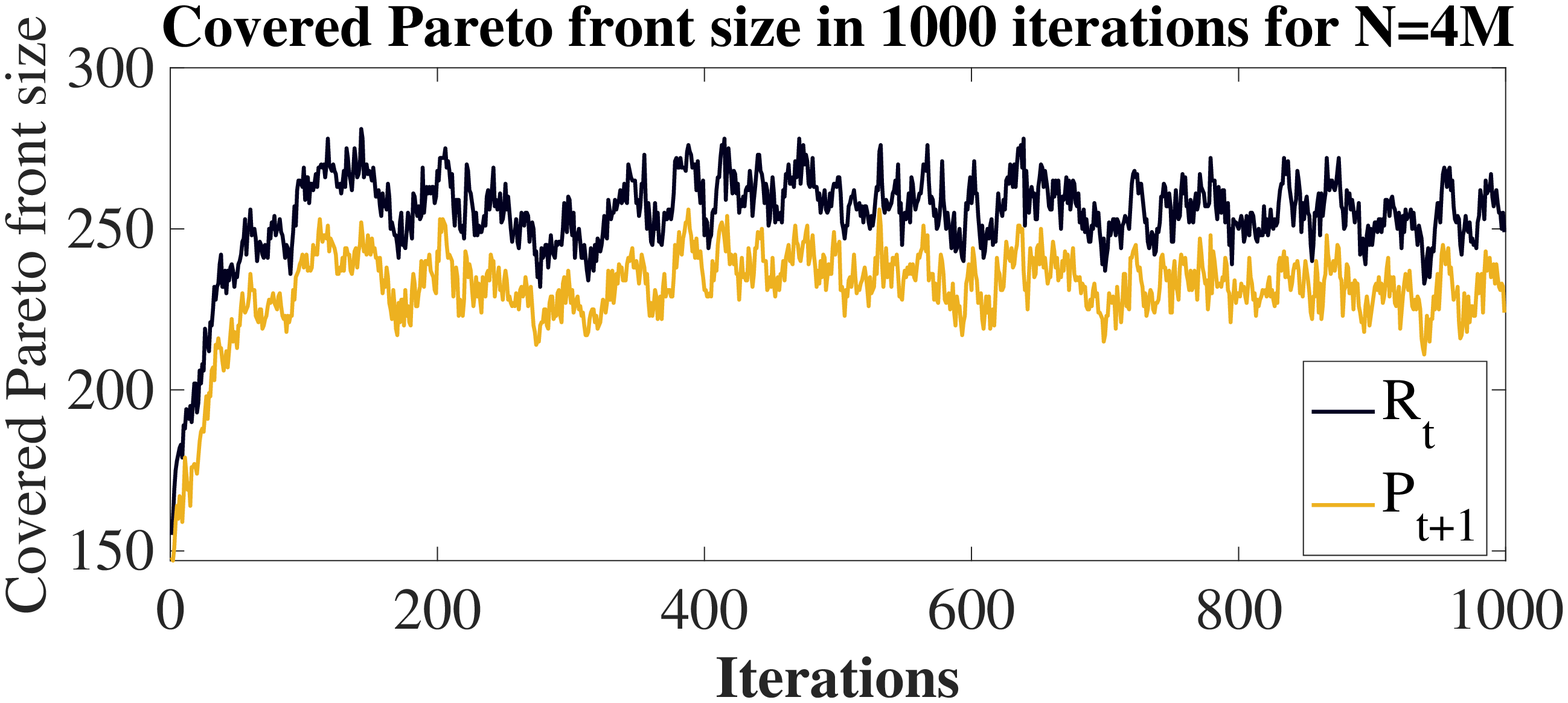} 
\includegraphics[width=2.85in]{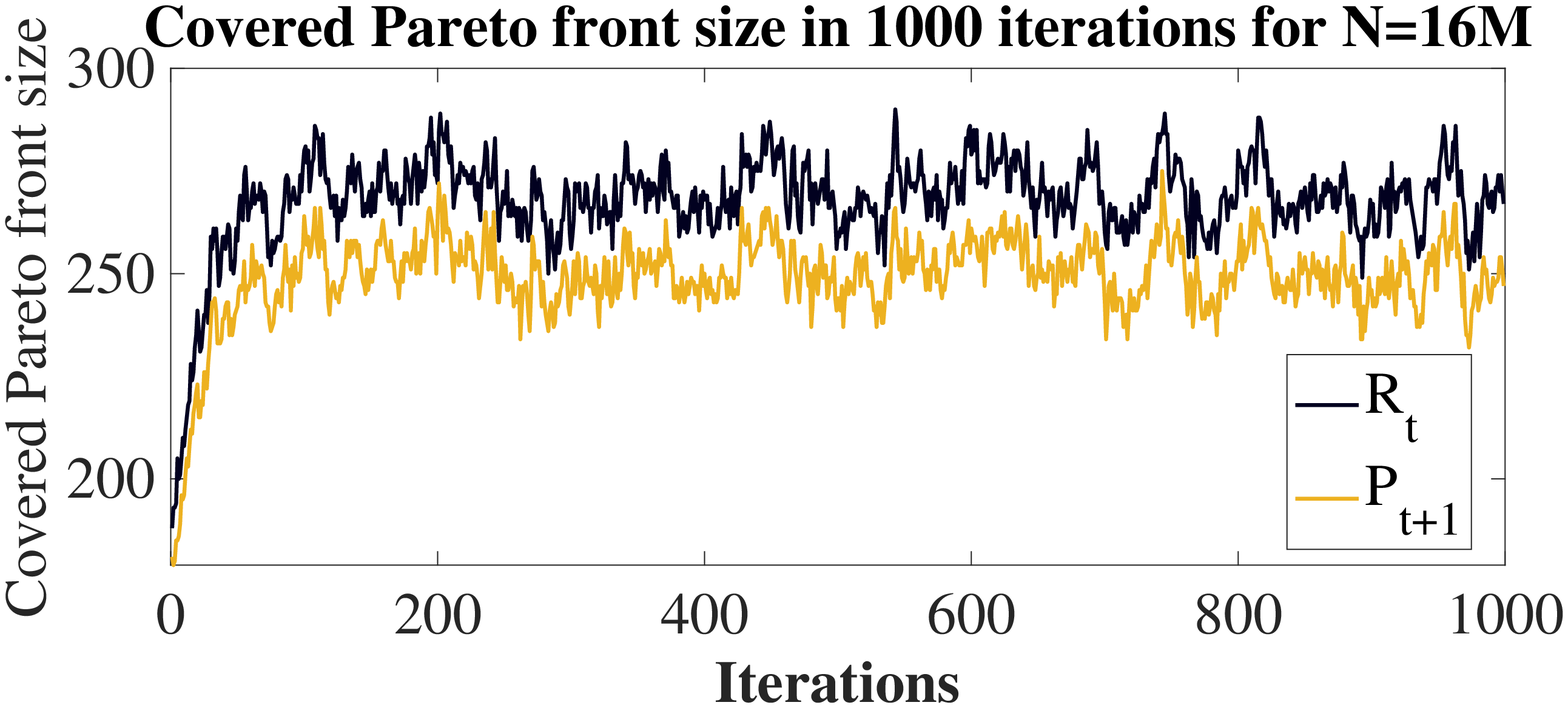} 
\includegraphics[width=2.85in]{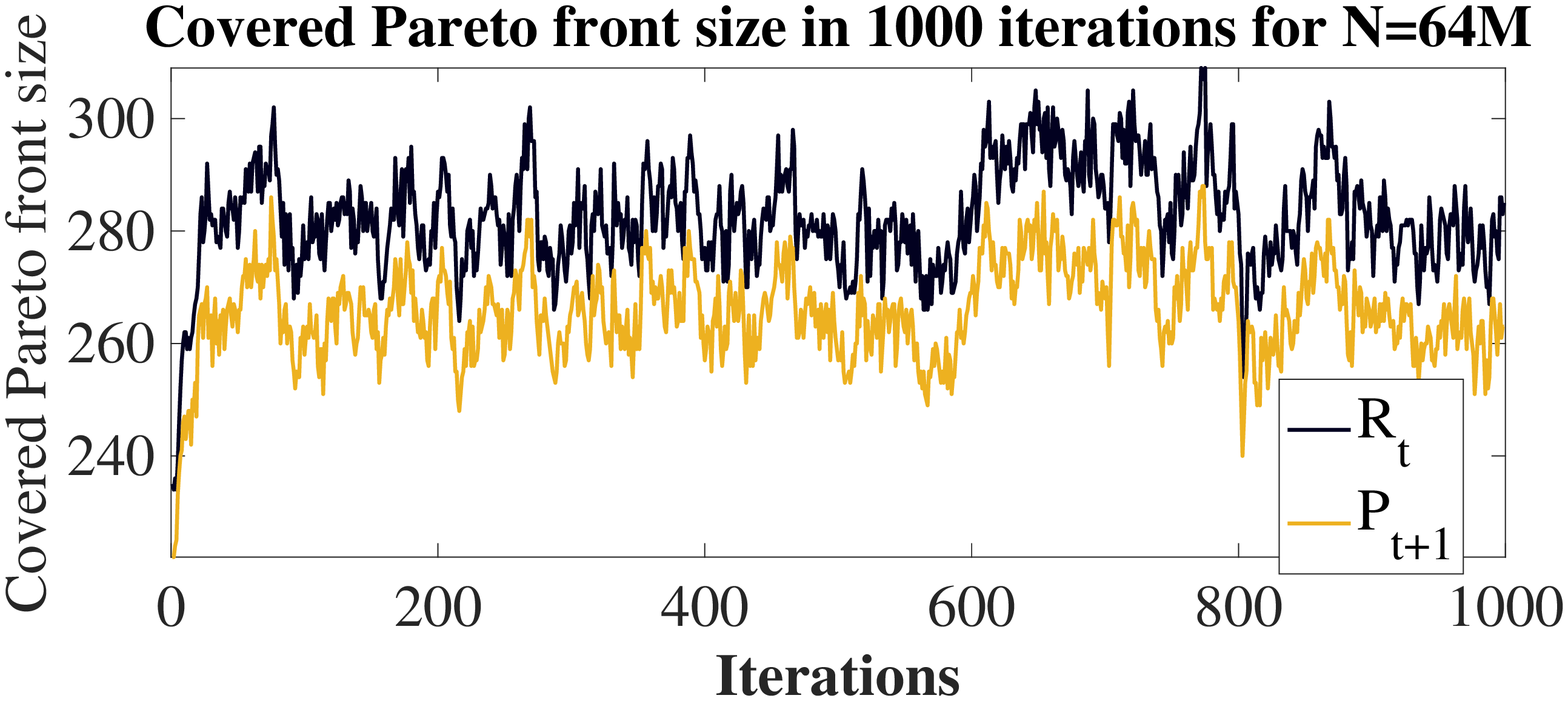} 
\includegraphics[width=2.85in]{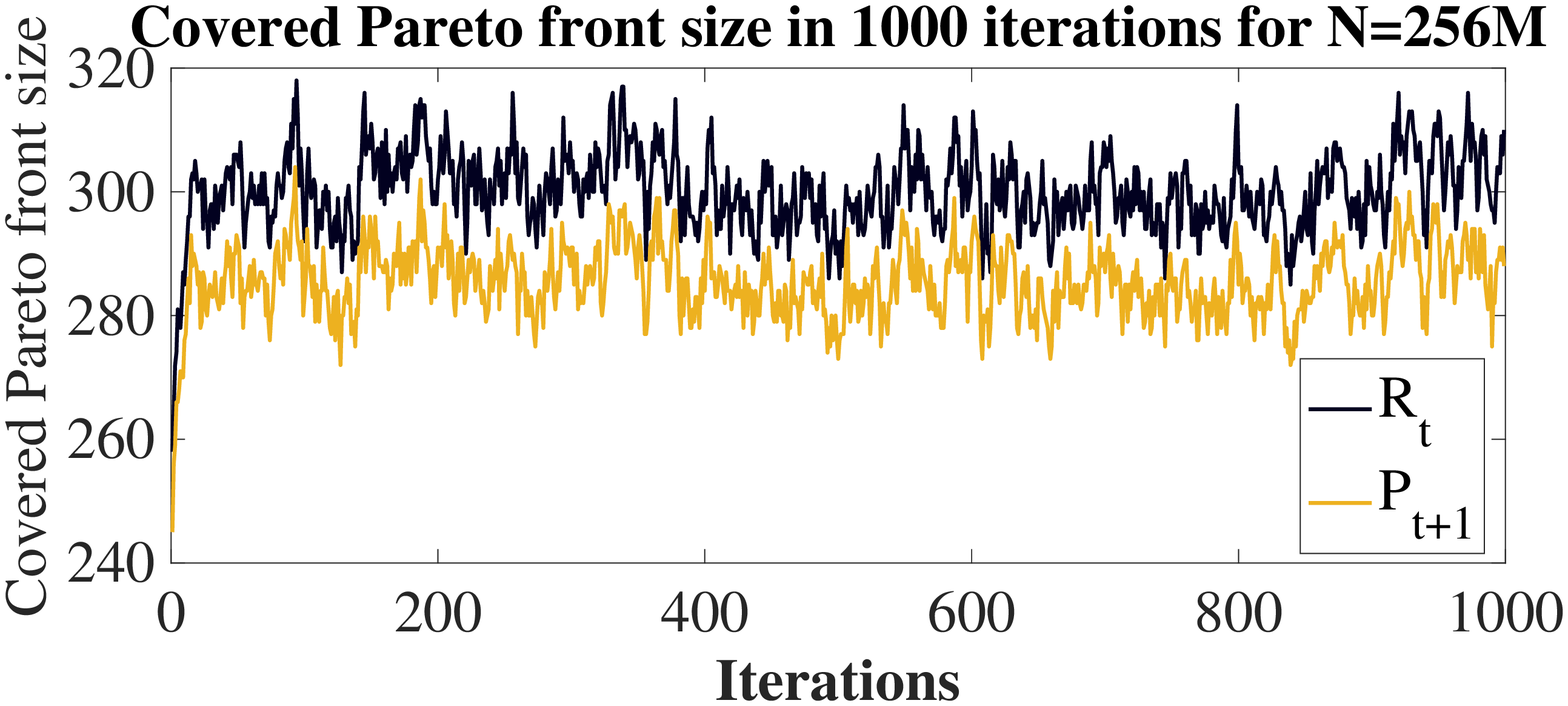} 
\caption{Size of the part of the Pareto front covered by the parent population~$P_t$ and the combined parent and offspring population $R_t$ in the first 1000 iterations of exemplary runs of the \NSGA on the $4$-\omm problem with problem size~$n=40$.}
\label{fig:coveredsize}
\end{figure}

To gain some insight into what happens in a single iteration, in Figure~\ref{fig:coveredF2F4} we display (in the objective space, only regarding the second and fourth objective as discussed earlier) the combined parent and offspring population after $1000$ iterations, the individuals with positive crowding distance, and the selected next parent population. The sizes of these populations naturally fit to the previous experiments, but what is new is the shape of the populations, which covers a relatively compact region around the center of the Pareto front. This is an insight that was not detected in our mathematical analysis. We believe that this fact strongly relies on the particular objective function, in contrast to the key argument in the proof of Theorem~\ref{lem:expruntime}, the negative effect of the random selection of the individuals with crowding distance zero, which depended very little on the particular objective function. For these reasons, we did not try to capture this observation via mathematical means. We suspect that with some effort this would be possible, and it might be a way to prove that even much larger population sizes are not sufficient to find the Pareto front of the \momm problem. We also see that the individuals with positive crowding distance appear randomly distributed, which fits again to our mathematical intuition. 

To check that a similar behavior is observed also by the 3-\omm benchmark, we conducted one small experiment on problem size $n=40$, with a population size of $N=16M$, over $10$ independent runs. It showed that the size of the Pareto front covered by the combined parent and offspring population $R$ in the $1,000$-th iteration has a mean value of $291$ (standard deviation of $7.36$). This is again significantly smaller than the Pareto front size of (again) $441$, and indicates that also in experiments there is no huge difference between the cases $m=3$ and $m=4$.

Overall, these experimental results fit well to our mathematical results and exhibit a few interesting details which our proof could not detect.

\begin{figure}[!ht]
\centering
\includegraphics[width=5in]{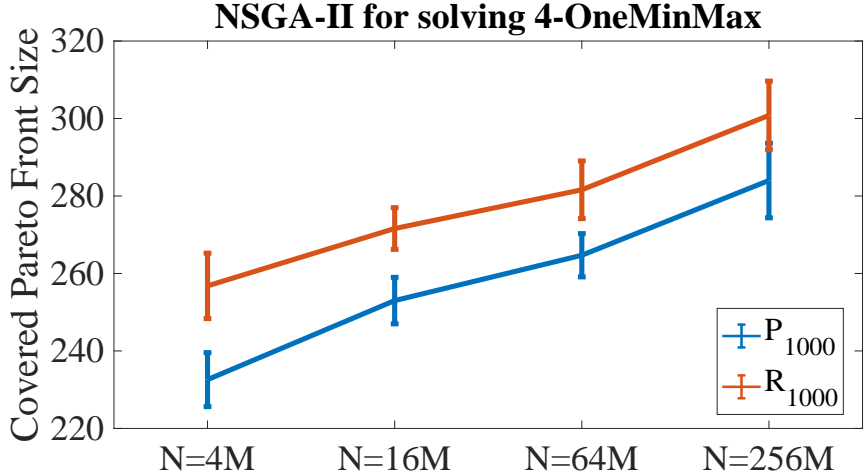} 
\caption{The size of the Pareto front covered by $P_t$ and $R_t$ (mean value $\pm$ standard deviation) in the $1,000$-th iteration for the \mbox{NSGA-II} (fair selection, standard bit-wise mutation) with population size $N=4M,16M,64M,256M$ on \momm with problem size $n=40$ and $4$ objectives, where $M=441$ is the Pareto front size (10 independent runs).}
\label{fig:overall}
\end{figure}

\begin{figure}
\centering
\includegraphics[width=2.75in]{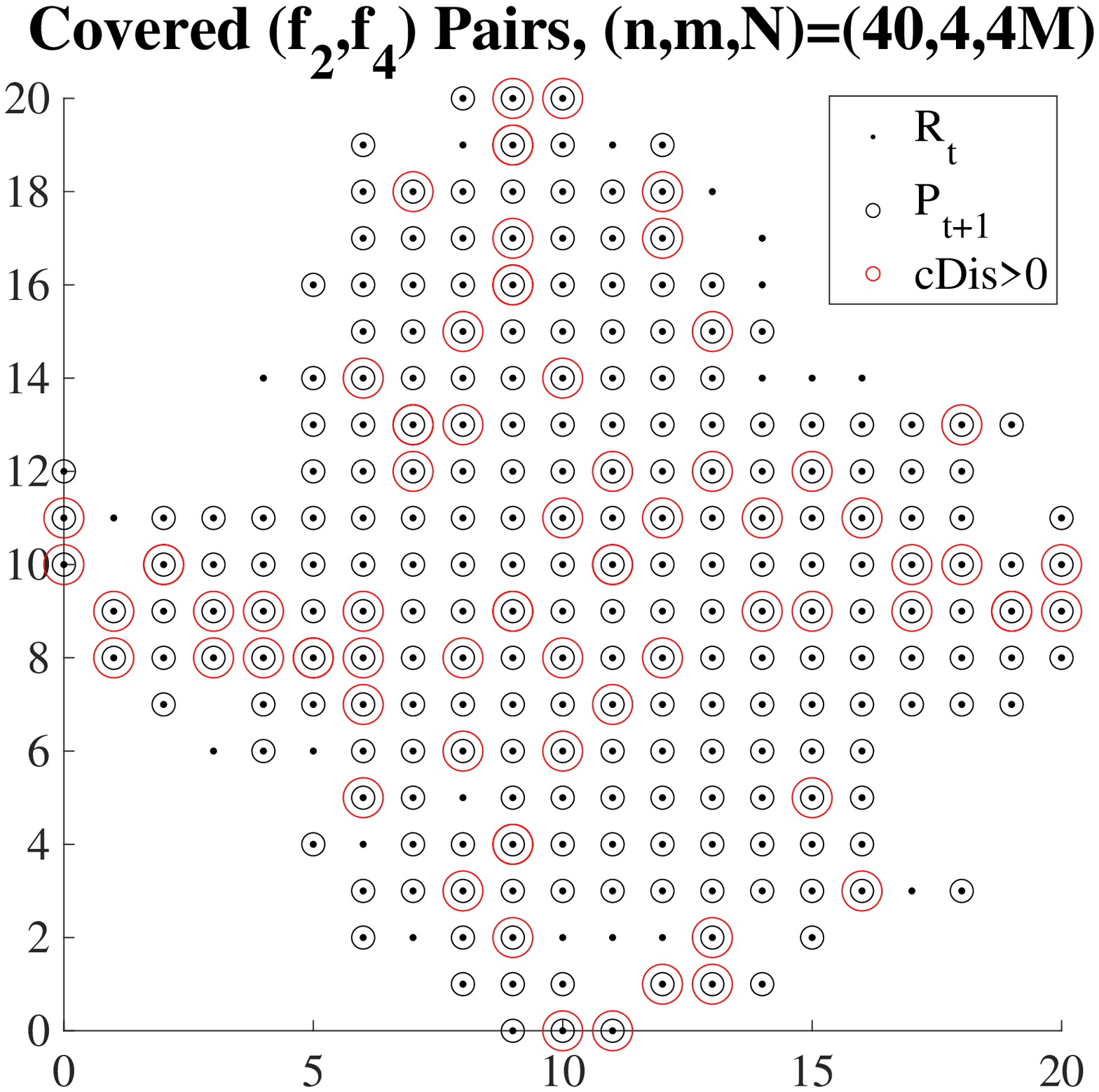} 
\includegraphics[width=2.75in]{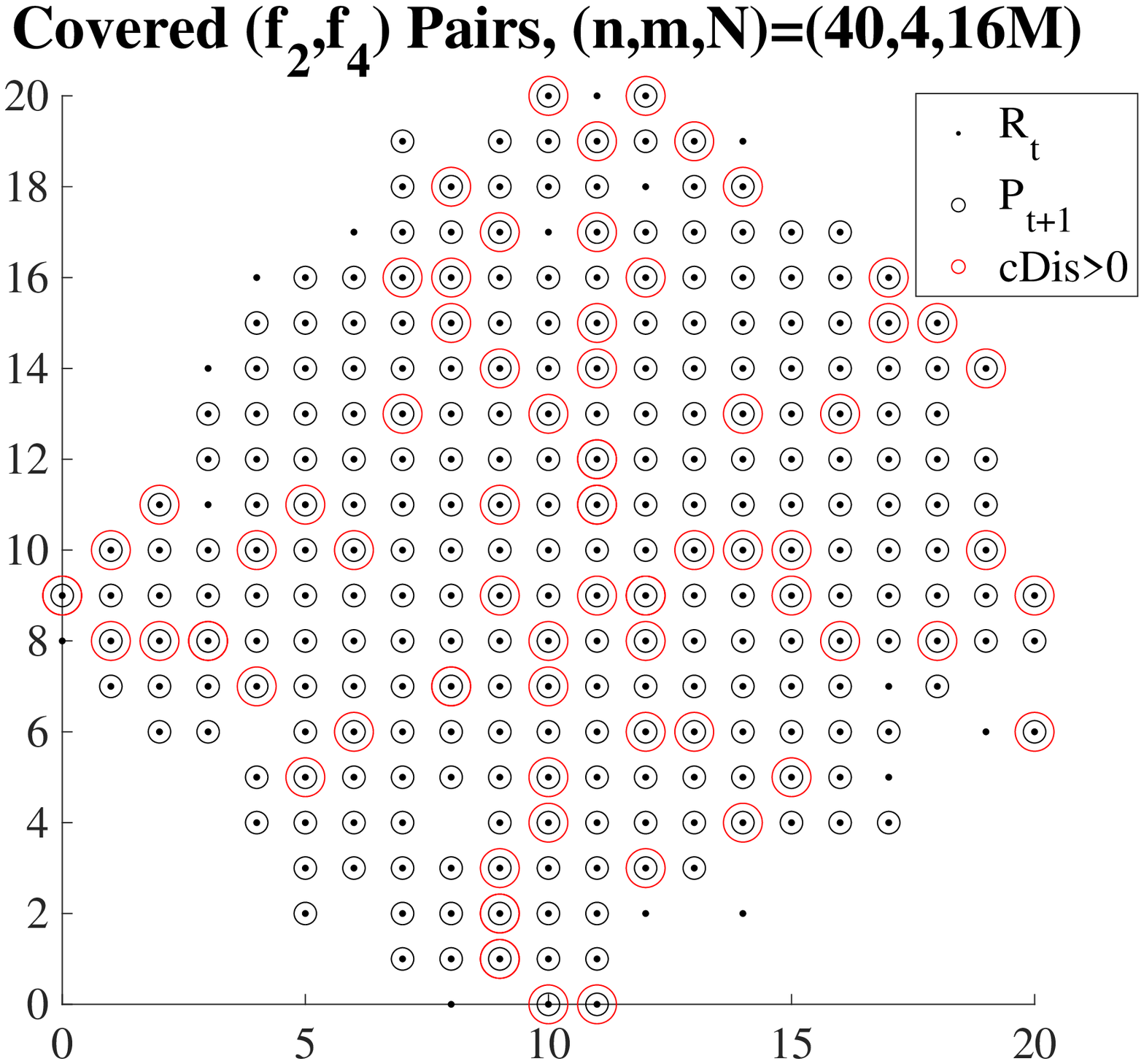} 
\includegraphics[width=2.75in]{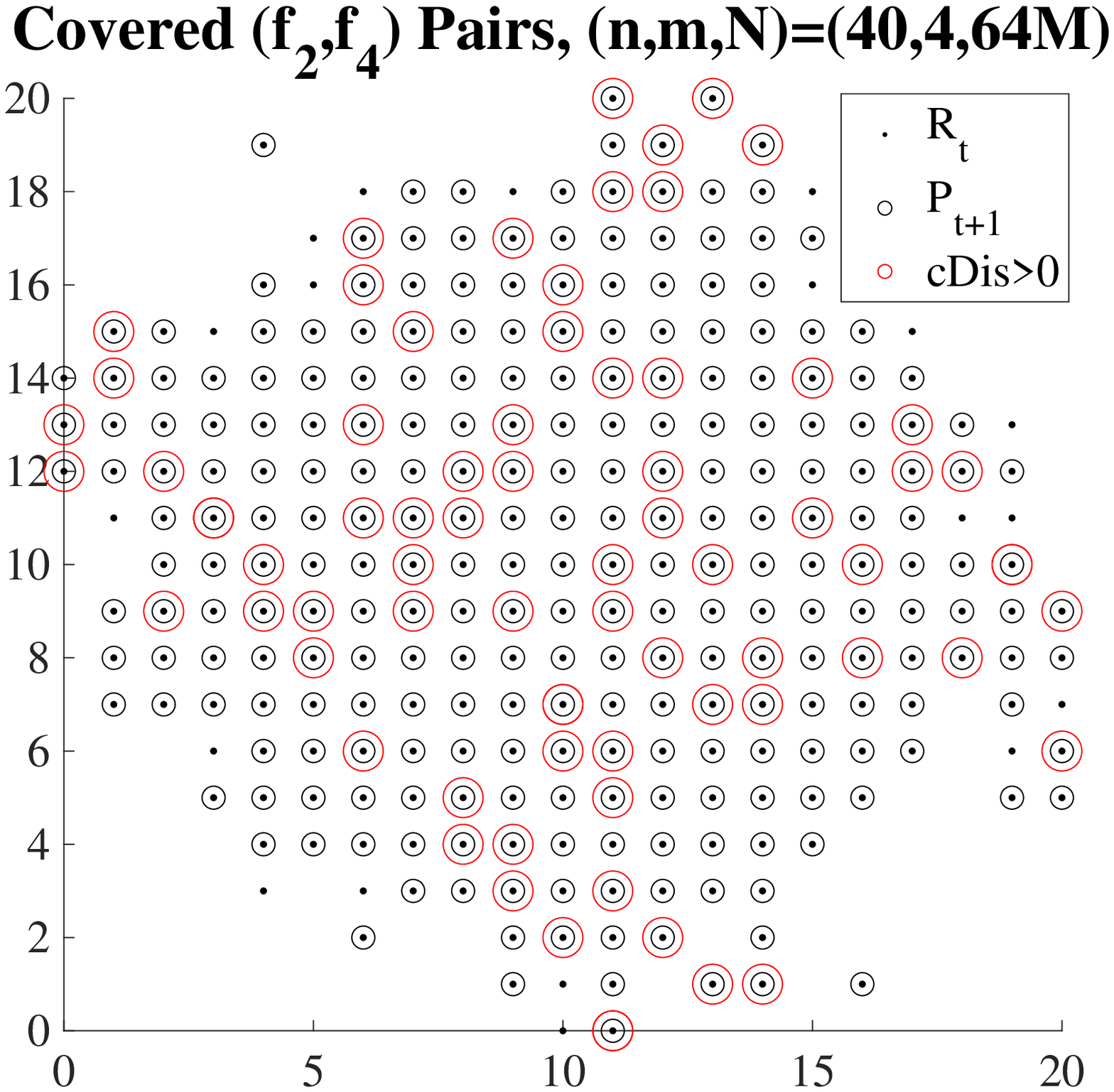} 
\includegraphics[width=2.75in]{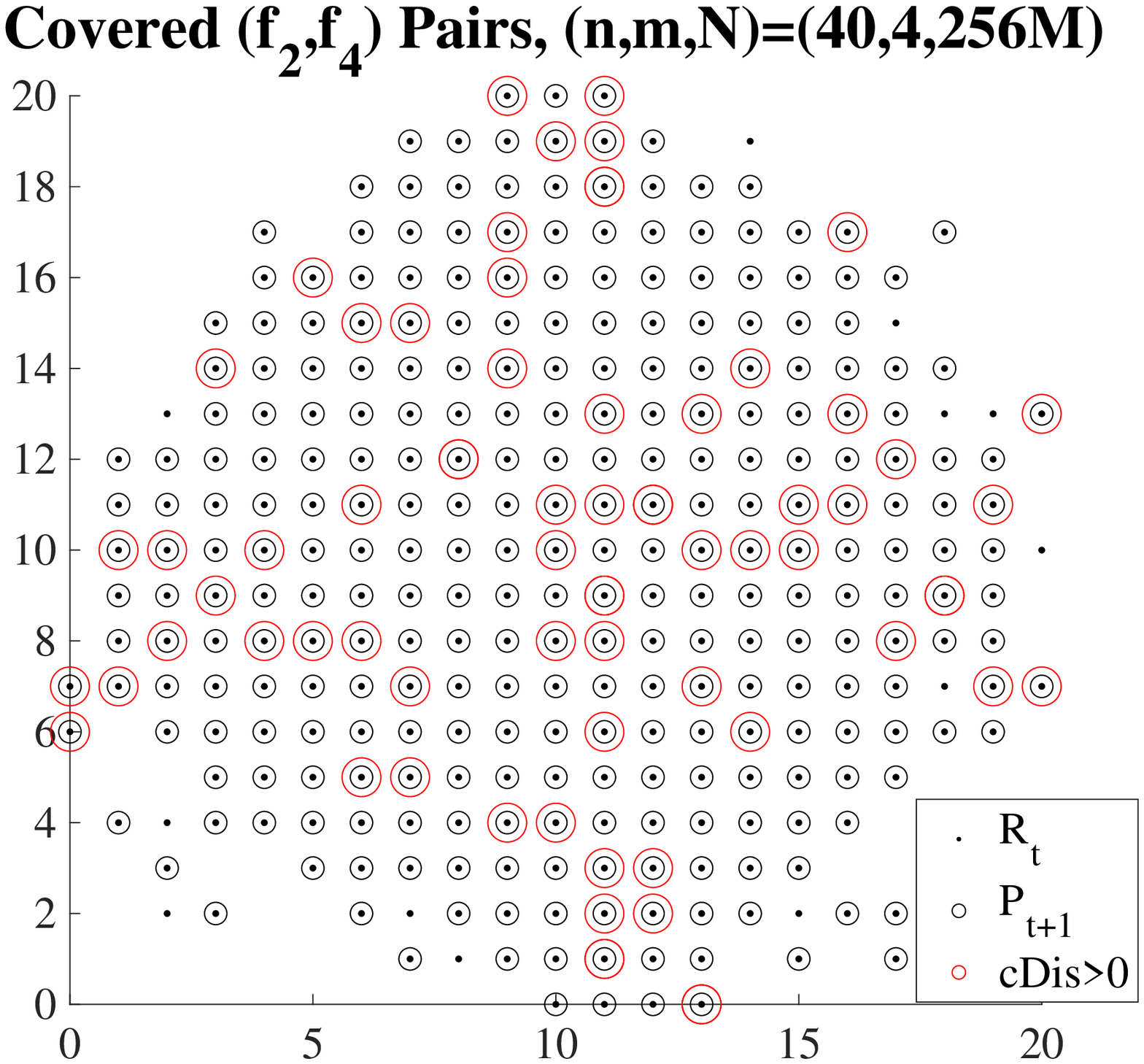} 
\caption{Covered $(f_2,f_4)$ objective values before (the combined parent population $R_t$ as well as the individuals in $R_t$ with positive crowding distance) and after (the next population $P_{t+1}$) the original survival selection of the \mbox{NSGA-II} (fair selection, standard bit-wise mutation) with population size $N=4M,16M,64M,256M$ on \momm with problem size $n=40$ and $4$ objectives where $M=441$ is the Pareto front size. Displayed is one typical run.}
\label{fig:coveredF2F4}
\end{figure}

\section{Conclusion}

In this work, we conducted the first mathematical runtime analysis of the \NSGA on many-objective problems. It confirms in a very strong manner the empirical knowledge that the \NSGA is less effective for larger numbers of objectives: Already for the very simple \momm benchmark, for which any solution is Pareto optimal, the population of the \NSGA will miss a constant fraction of the Pareto front for at least exponential time. In this result, the population size can be any constant multiple of the Pareto front size. 

Besides this quantification of the ineffectiveness, our work also exhibits the reason for these difficulties: Since the crowding distance is computed by regarding the different objectives independently, solutions that are far away may still cause a solution to have a very low crowding distance, disfavoring it in the selection stage. This effect can be so strong that almost all solutions have a crowding distance of zero. In this case, essentially a random set of $N$ individuals will be removed in the selection stage, which makes it very probable that desired solutions are lost.

The obvious question arising from this work is whether there is a definition for the crowding distance which (ideally provably) avoids this shortcoming, but which can still be computed efficiently. A second interesting task for future research would be to conduct similar mathematical runtime analyses for other common MOEAs. This work has shown that the \NSGA suffers drastically from three objectives on, but the synthetic GSEMO algorithm does not (ignoring a polynomial increase of the runtime, which cannot be avoided simply because also the Pareto front increases in size). This raises the question to what extent other MOEAs provably suffer from larger numbers of objectives. Empirical work like~\cite{KhareYD03} has shown that larger numbers of objectives are a challenge for many MOEAs, but that they suffer from these in a different manner. Hence a mathematical analysis and asymptotic quantification of these observations would be very interesting.  

We also note that this paper does not discuss the convergence to the Pareto front point as each solution is Pareto optimal for \momm. Some work like~\cite{EmmerichDLK10} pointed out the divergent behavior of the \NSGA and other MOEAs. How heavily these behaviors influence the convergence ability of the \NSGA is still unknown. It would be an interesting future work.

\section*{Acknowledgments}
This work was supported by National Natural Science Foundation of China (Grant No. 62306086), Science, Technology and Innovation Commission of Shenzhen Municipality (Grant No. GXWD20220818191018001), Guangdong Basic and Applied Basic Research Foundation (Grant No. 2019A1515110177).

This work was also supported by a public grant as part of the Investissement d'avenir project, reference ANR-11-LABX-0056-LMH, LabEx LMH.

\newcommand{\etalchar}[1]{$^{#1}$}

}
\end{document}